\documentclass[twoside]{article}

\usepackage[accepted]{aistats2026}
\usepackage[utf8]{inputenc} % allow utf-8 input
\usepackage[T1]{fontenc} % use 8-bit T1 fonts
\usepackage{nicefrac} % compact symbols for 1/2, etc.

% Formatting of lists
\usepackage{enumitem}
\setlist[itemize]{noitemsep, topsep=0pt}
\setlist[enumerate]{noitemsep, topsep=0pt}

% Biblio
\usepackage[round]{natbib}

% Table of contents (needs to come before hyperref!)
\usepackage{titletoc}
\usepackage[page,header]{appendix}

% Hyperlinks
\usepackage{url} % simple URL typesetting
\usepackage{hyperref}
\hypersetup{
    colorlinks,
    citecolor=blue,
    filecolor=blue,
    linkcolor=blue,
    urlcolor=blue,
}

% Figures
\usepackage{graphicx}
\usepackage{chngcntr} % make figure numbering in appendix work
% Suppress weird warnings about pdf page groups
\begingroup
    \expandafter\ifx\csname pdfsuppresswarningpagegroup\endcsname\relax\else\global\pdfsuppresswarningpagegroup=1\relax\fi
\endgroup
\usepackage[inkscapelatex=false]{svg} %for svg images
\usepackage{tikz}

% Tables
\usepackage{makecell}
\usepackage{tabularx}
\usepackage{booktabs}
\usepackage{multirow}
\usepackage{colortbl}
\usepackage{tablefootnote}
\usepackage{array}

% Algorithms
\usepackage{algpseudocode}
%\usepackage{algorithm}
%\usepackage{algorithmic}

% Math typesetting
\usepackage{amsmath,amssymb,amsfonts}
\usepackage{amsthm}

% Small captions
\usepackage{caption}
\usepackage{subcaption}
\captionsetup[table]{font=small,labelfont=small}
\captionsetup[figure]{font=small,labelfont=small}

% Already imported: \usepackage{amsthm}
\newtheorem{thm}{Theorem}
\newtheorem{lem}[thm]{Lemma}
\newtheorem{defn}[thm]{Definition}

\usepackage{xcolor}

\newcommand*\circled[1]{\tikz[baseline=(char.base)]{
            \node[shape=circle,draw,inner sep=2pt] (char) {#1};}}

\urlstyle{rm} % DO NOT CHANGE THIS

\begin{document}

%\runningtitle{The Partial Volume Over the ROC Surface}
\runningtitle{Partial VOROS}

\twocolumn[

\aistatstitle{Partial VOROS: A Cost-aware Performance Metric for Binary Classifiers with Precision and Capacity Constraints}

\aistatsauthor{ Christopher Ratigan\footnotemark[1], Kyle Heuton\footnotemark[2], Carissa Wang\footnotemark[2], Lenore Cowen\footnotemark[1]\footnotemark[2], Michael C. Hughes\footnotemark[2]}

\aistatsaddress{\footnotemark[1]Department of Mathematics, Tufts University, Medford, MA, USA\\\footnotemark[2]Department of  Computer Science, Tufts University, Medford, MA, USA
}%endaddress 

]

\begin{abstract}
The ROC curve is widely used to assess binary classifiers. 
Yet for some applications, such as alert systems for monitoring hospitalized patients, conventional ROC analysis cannot meet two key deployment needs:  enforcing a constraint on precision to avoid false alarm fatigue and imposing an upper bound on the number of predicted positives to represent the capacity of hospital staff.
The usual area under the curve metric also does not reflect asymmetric costs for false positives and false negatives. 
In this paper we address all three of these issues.
First, we show how the subset of classifiers that meet precision and capacity constraints occupy a feasible region in ROC space.
We establish the polygon-shaped geometry of this region. 
We then define the \emph{partial area of lesser classifiers}, a performance metric that is monotonic with cost and only accounts for the feasible region.
% of ROC space. 
Averaging this area over a desired distribution for cost parameters results in the partial volume over the ROC surface, or \emph{partial VOROS}. 
In experiments predicting mortality risk from vital sign history on several datasets, we show this cost-aware metric can outperform alternatives at ranking classifiers for in-hospital alerts.
\end{abstract}

\section{INTRODUCTION}

When a classifier of binary events is deployed in a high-stakes application, it must respect important operational constraints. 
First, context-specific costs mean that false positive predictions usually have different consequences than false negatives in a given task. 
Developing and evaluating classifiers in a cost-aware way is key to deployment success~\citep{provost1997analysis,drummond2000explicitly}.
Second, stakeholders may specify viable ranges for certain performance metrics for the system to be beneficial.
Finally, stakeholders may have capacity constraints, in terms of the overall number of positive predictions or negative predictions they can handle smoothly when deployed.

In this work, our goal is to develop a  performance metric that can effectively rank classifiers when costs, performance constraints, and capacity constraints all matter.
Previous work has suggested many metrics and visuals for evaluating binary classifiers, surveyed later in Sec.~\ref{sec:related_work}.
We take as a starting point a cost-aware analysis of the receiver-operating-characteristic (ROC) curve.
Recent work by \citet{ratigan2024voros}   lifts the classic 2D ROC curve into a 3D surface where the third axis defines the cost. They proposed a performance metric, the volume over the ROC surface or \emph{VOROS}, that can identify when a binary classifier outperforms another given a task-specific range of estimated costs.
Our work here extends this analysis to incorporate constraints on precision and capacity, which are critical in many applications such as fraud detection, credit scoring, and early warning systems.
\let\thefootnote\relax\footnotetext{\noindent Code: \href{https://github.com/tufts-ml/partial-VOROS}{github.com/tufts-ml/partial-VOROS}.}

As a motivating task of interest, consider the evaluation of alert systems for monitoring the health of hospitalized patients.
Here, a classifier must take in recent data about a patient's health and determine whether or not to alarm. 
An alarm indicates the patient's health may be deteriorating, so doctors or nurses should check on that patient soon.
%Alerts are possible at defined times throughout a patient's stay.
An ever-increasing body of literature has developed such systems via machine learning~\citep{abellaalvarezICUWallsProject2013,hylandEarlyPredictionCirculatory2020,sendakRealWorldIntegrationSepsis2020,muralitharanMachineLearningBased2021,edelsonEarlyWarningScores2024}.
Careful evaluation of such systems is critical to ensure they balance tradeoffs appropriately and provide a net benefit to the patient population and the hospital.

Hospital-based alert systems naturally have the three aforementioned operational constraints:

\begin{itemize}[leftmargin=*]
    \item First, there are asymmetric costs to the different kinds of mistakes. A false positive has some cost by taking valuable time from clinical staff that could be spent on other patients with greater needs. A false negative incurs even more cost, as this means a patient did get sicker or even die and an alert that might have helped was never issued.
    \item Second, hospital staff often express a key performance metric constraint: the alert classifier's \emph{precision}, the fraction of all alarms that are true positives, needs to meet some minimum value for the system to be viable~\citep{harrisonAutomatedSepsisDetection2015,rath2022optimizing}. 
        A survey of critical care physicians in South Korean hospitals~\citep{parkCurrentStatusRapid2022} found that too many false positives were the top concern about early warning systems; the median response requested a precision of at least 28.5\%. % = 1/(1+2.5)
    Clinical staff may learn to ignore many or all alarms from the system altogether if its precision is not tolerable, a problem known as \emph{alarm fatigue}~\citep{cvachMonitorAlarmFatigue2012}. Ignoring alarms entirely due to low precision has been a documented safety concern for decades~\citep{sendelbachAlarmFatiguePatient2013,albanowskiTenYearsLater2023}.
    \item Finally, the staff's capacity to respond to alarms must be accounted for. In a typical U.S. ICU, the patient-to-doctor ratio is on average 11.8 and almost always above 6~\citep{kahnIntensivistPhysiciantopatientRatios2023}. If alarms for all patients happened at once, only a fraction could be triaged right away. It is crucial to ensure system evaluation takes into account such constraints.
\end{itemize}

In this paper, we develop a new performance metric, the \emph{Partial VOROS}, which extends the VOROS from \citet{ratigan2024voros} to account for precision and capacity constraints while preserving the original VOROS' ability to handle asymmetric costs.
In Sec.~3, we formalize how these constraints narrow the entirety of ROC space to a feasible region satisfying all constraints.
In Sec.~4, we provide formulas for evaluating areas of lesser classifiers and volumes over this feasible region using desired cost distributions.
In Sec.~5 and 6, we examine prediction tasks on real health records data, showing how our partial VOROS can identify promising classifiers by accounting for constraints and costs in ways other metrics cannot.

\begin{figure*}[!t]
    \begin{tabular}{c c c}    \includegraphics[width=0.4\textwidth]{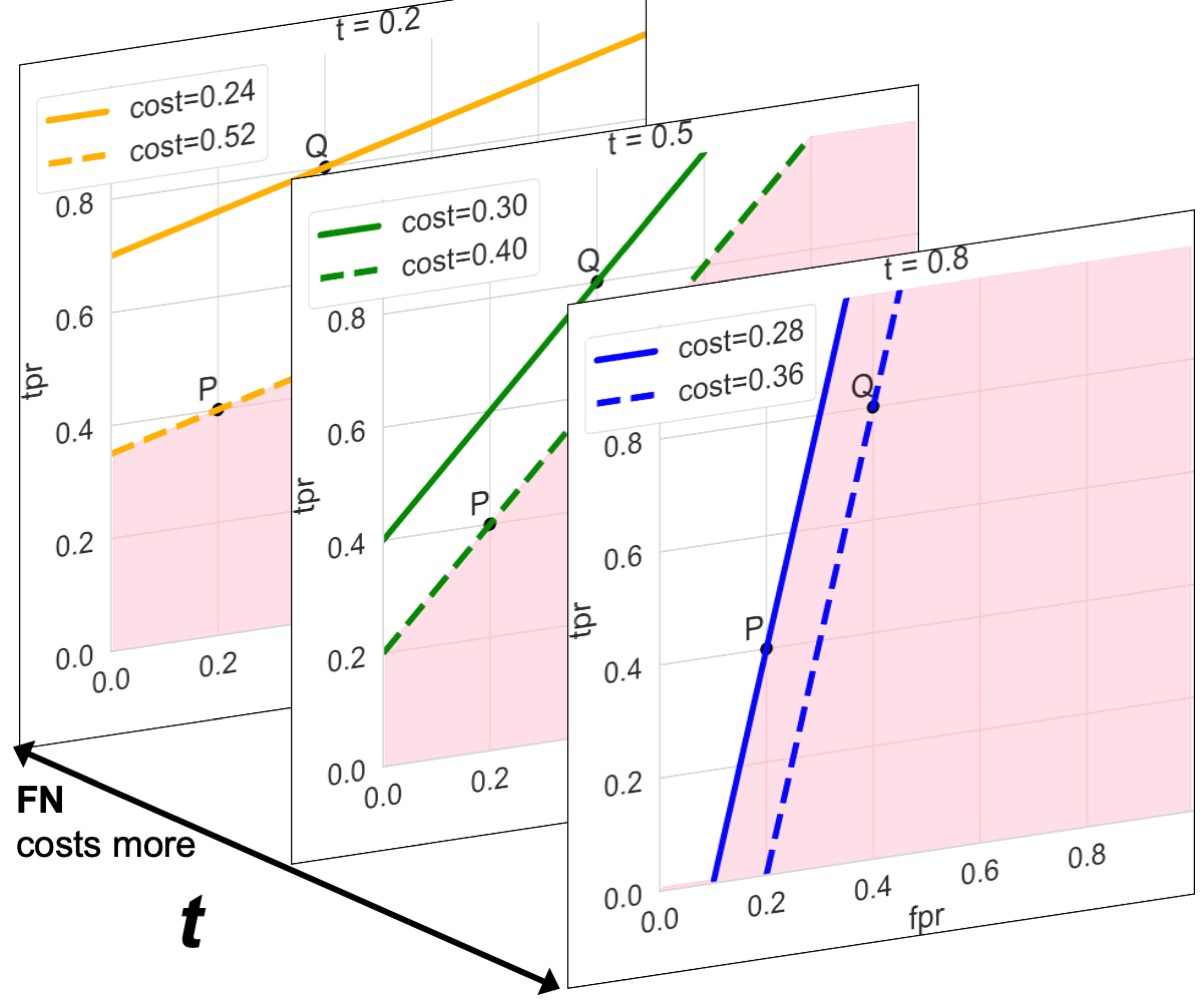}
    & &
    \includegraphics[width=0.4\textwidth]{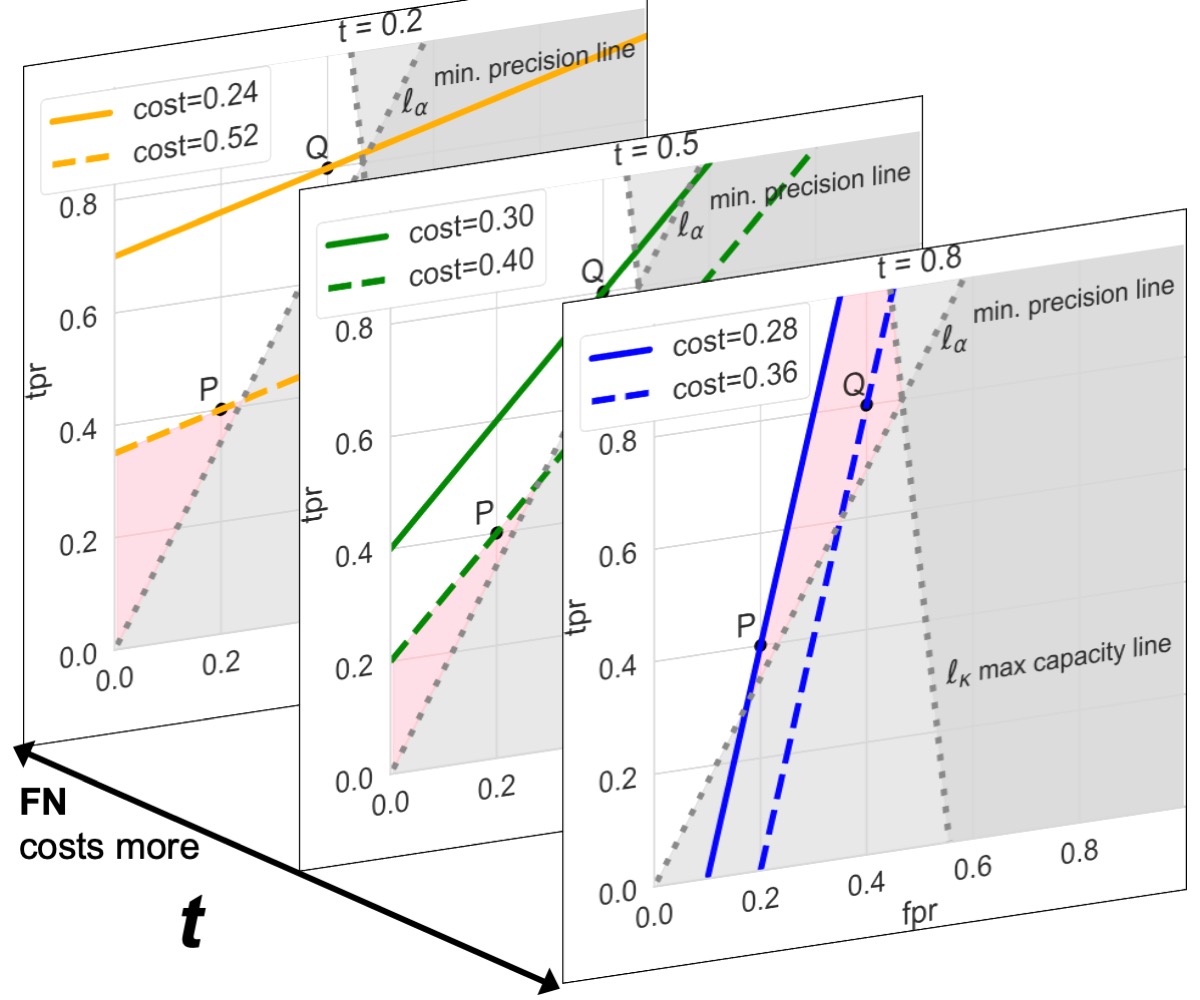}
    \end{tabular}
    \caption{Overview of VOROS (left) and our new partial VOROS (right).
    \emph{Left:} At each fractional cost parameter $t \in [0, 1]$, we  draw iso-performance lines for points $P$ and $Q$ in ROC space. 
    Solid lines have lower cost than dashed lines.
    Points below a line have higher cost than that line.
    A point's \emph{area of lesser classifiers}, colored light pink here for point $P$, is the area below that point's iso-performance line.
    The VOROS~\citep{ratigan2024voros} for a classifier is computed by finding at each $t$ the maximum area of lesser classifiers for any point in its ROC curve, then integrating over a desired distribution $p(t)$. 
    \emph{Right:} Partial VOROS \textbf{excludes} regions in gray that do not achieve a minimum precision $\alpha$ or exceed a maximum capacity $\kappa$ (too many positive predictions).
    These limits correspond to linear constraints in ROC space.
    }
    \label{fig:overview_diagram}
\end{figure*}

\section{BACKGROUND}
\label{sec:background}

\newcommand{\nALL}{N_{*}}
\newcommand{\nTP}{N_{\textnormal{TP}}}
\newcommand{\nFP}{N_{\textnormal{FP}}}
\newcommand{\nFN}{N_{\textnormal{FN}}}
\newcommand{\nTN}{N_{\textnormal{TN}}}
\newcommand{\cFP}{C_{\textnormal{FP}}}
\newcommand{\cFN}{C_{\textnormal{FN}}}
\newcommand{\cRatio}{C_{\textnormal{FP}:\textnormal{FN}}}

We consider an observed dataset $\mathcal{D} = \{X_i, Y_i\}_{i=1}^{\nALL}$ of $\nALL$ total pairs indexed by $i$ of a feature vector $X_i \in \mathcal{X}$ and its associated binary label $Y_i \in \{0,1\}$. Let $\mathcal{P}$ denote the subset of this dataset with positive labels $Y_i=1$, and $\mathcal{N} = \mathcal{D} \setminus \mathcal{P}$ denote the subset with negative labels. 

We wish to use dataset $\mathcal{D}$ to evaluate a score-producing binary classifier $\mathcal{F} : \mathcal{X} \rightarrow \mathbb{R}$. This score can be thresholded to produce a binary prediction $\hat{Y}_i$. We refer to a classifier using threshold $\tau$ to make binary predictions as a \textbf{binarized classifier}, denoted $\mathcal{F}_\tau$. By comparing predictions to true labels over all $\nALL$ examples, we can count the number of true positives $\nTP$, false positives $\nFP$, true negatives $\nTN$, and false negatives $\nFN$.

We now establish concepts needed for a cost-aware ROC analysis, following~\citet{ratigan2024voros}.

\begin{defn}[ROC Space]
The performance of binarized classifier $\mathcal{F}_{\tau}$ on dataset $\mathcal{D}$ at a particular threshold $\tau$ can be represented as a point $(h,k)$ in two-dimensional space where $h$ is the false positive rate and $k$ the true positive rate.
We refer to the set of all possible such $(h,k)$ points, which span the unit square $[0.0,1.0] \times [0.0,1.0]$, as \textbf{ROC space}.
\end{defn}

ROC space is useful for assessing classifier performance with respect to cost.
Let $\cFP$ define the cost of a false positive and $\cFN$ the cost of a false negative.
Given a dataset, the total cost of all possible mistakes is given by $\cFP|\mathcal{N}|+\cFN|\mathcal{P}|$. Naturally, assigning costs to points in ROC space depends on the \emph{ratio} of costs $\cRatio = \frac{\cFP}{\cFN}$ and the relative sizes of $|\mathcal{P}|$ and $|\mathcal{N}|$. Following~\citet{ratigan2024voros}, we can capture this dependency in one parameter $t \in [0.0, 1.0]$.

\begin{defn}[Fractional Cost Parameter]
    Let $t=\dfrac{\cFP |\mathcal{N}|}{\cFP |\mathcal{N}|+\cFN |\mathcal{P}|}$ denote the portion of aggregate misclassification cost due to false positives. We have $0 \leq t \leq 1$.
    \label{defn:t}
\end{defn}
When the dataset $\mathcal{D}$ is fixed, an increase in $t$ implies the cost ratio $\cRatio$ is increasing.
We cannot compare $t$ across datasets with different class balances.

\begin{defn}[Normalized Cost]
    \label{defn:cost}
    The \textbf{normalized cost} of a ROC point $(h,k)$ for dataset $\mathcal{D}$ given parameter $t$ is
    $$\textnormal{Cost}_t(h,k)=t h + (1-t)(1-k)
    = \frac{\cFP \nFP + \cFN \nFN}{ \cFP|\mathcal{N}| + \cFN |\mathcal{P}|}.
    \label{eq:cost}
    $$
\end{defn}
The worst possible point in ROC space, $(1,0)$, would have cost of 1.0. The best point $(0,1)$ would have cost 0.0. All other points have cost in between 0.0 and 1.0, reflecting their cost relative to the worst point.

This formulation of cost as a linear function of ROC coordinates $(h,k)$ is well-known \citep{drummond2000explicitly}. However, \citet{ratigan2024voros} was the first paper to add a separate axis based on the $t$ parameter defined above to ROC space to compare classifiers with respect to variable costs. The key notion here was the ROC surface defined below.

\begin{defn}[ROC Surface]
    Let $(h,k)$ be a point in ROC space, the \textbf{ROC surface} associated to $(h,k)$ is the saddle surface in 3D space with coordinates $x,y,t$ given equivalently by
    $t=\frac{y-k}{y-k+x-h}$ or $ y=\frac{t}{1-t}(x-h)+k$,
    where $t$ is the fractional cost  from Def.~\ref{defn:t}.
\end{defn}

Given a fixed $t$ value, this formulation allows us to find all points $(x,y)$ in ROC space with the same cost as the point $(h,k)$.   
This same-cost set is known as an \emph{iso-performance line} ~\citep{provost2001robust}.

\begin{defn}[Iso-performance Line]
Let $(h,k)$ be a point in ROC space and let $t$ be a fixed fractional cost parameter. Then the line in ROC space
$$y=\frac{t}{1-t}(x-h)+k$$
represents all points $(x,y)$ with the same cost as $(h,k)$ using $t$ and is called an \textbf{iso-performance line}.
\end{defn}
Each ROC panel in Fig.~\ref{fig:overview_diagram} visualizes for a specific $t$ the iso-performance lines for the same two points $P$ and $Q$. Varying $t$ adjusts the iso-performance line's slope.

\section{BOUNDS ON ROC SPACE}

A common criticism of ROC space and the area under the ROC curve is that it fails to measure the performance of a binary classifier under appropriate operational constraints. These criticisms still hold for the ROC surface and the volume over the ROC surface measure of \cite{ratigan2024voros}. In this section we introduce two conditions -- a bound on precision and a bound on capacity -- that restrict the allowed classifiers in ROC space.
These constraints are motivated by early warning classifiers for hospitalized patient monitoring~\citep{harrisonAutomatedSepsisDetection2015,rath2022optimizing}.
We then define a portion of ROC space meeting these constraints and other key assumptions.

\subsection{Precision Bound}

\begin{defn}[Precision]
The precision of the binarized classifier $\mathcal{F}_{\tau}$ on dataset $\mathcal{D}$ is the fraction of positive predictions that are correct:
$$\text{Prec} =\dfrac{\nTP}{\nTP+\nFP}$$
\end{defn}
Precision is also known as positive predictive value.

\begin{defn}[Precision Bound]
To be feasible, a classifier must satisfy a \textbf{minimum precision bound}:
$$
\text{Prec} \geq \alpha
$$
\end{defn}
\vspace{-3mm}
Here, the desired precision $\alpha > 0$ can be set by talking with stakeholders about their tolerance for false alarms.
This constraint is motivated by applications of ML to early warning alert systems, especially in hospitals~\citep{harrisonAutomatedSepsisDetection2015,rath2022optimizing}.

For dataset $\mathcal{D}$, let $p=\frac{|\mathcal{P}|}{|\mathcal{D}|}$ define the \textbf{prevalence}, the fraction of all examples that are positive. Using $p$, then we can recast the definition of precision itself, as well as the bound above, in terms of ROC coordinates.

\begin{lem}[Precision Bound in ROC Space]
Let $\mathcal{F}_{\tau}$ be a binarized classifier with ROC coordinates $(x,y)$ on a dataset with prevalence $p$. Then
%Then its precision is at least $\alpha$ iff 
$$ \text{Prec} \geq \alpha ~~~\text{if}~~~ y \geq \dfrac{\alpha(1-p)}{(1-\alpha)p}x.$$
We call the line in ROC space where this holds with equality the \textbf{minimum precision line} $\ell_{\alpha}$.
\end{lem}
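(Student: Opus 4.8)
The plan is to translate the precision condition entirely into the ROC coordinates $(x,y)$ and the prevalence $p$, then reduce the inequality $\text{Prec} \geq \alpha$ to the claimed linear inequality by elementary algebra, taking care that each rearrangement preserves the direction of the inequality so that the equivalence holds in both directions.

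First I would rewrite the confusion-matrix counts in terms of the ROC coordinates. Since $x$ is the false positive rate and $y$ the true positive rate, we have $\nFP = x|\mathcal{N}|$ and $\nTP = y|\mathcal{P}|$. Writing $|\mathcal{P}| = p|\mathcal{D}|$ and $|\mathcal{N}| = (1-p)|\mathcal{D}|$ and substituting into the definition of precision, the common factor $|\mathcal{D}|$ cancels, giving
$$\text{Prec} = \frac{\nTP}{\nTP + \nFP} = \frac{yp}{yp + x(1-p)}.$$

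Next I would manipulate the inequality $\text{Prec} \geq \alpha$. Because the denominator $yp + x(1-p)$ is strictly positive whenever the classifier makes at least one positive prediction, i.e. $(x,y)\neq(0,0)$, I can clear it without reversing the inequality to obtain $yp \geq \alpha\bigl(yp + x(1-p)\bigr)$. Collecting the $y$-terms yields $yp(1-\alpha) \geq \alpha x(1-p)$, and dividing through by the strictly positive quantity $p(1-\alpha)$ (using $0 < p < 1$ and $0 < \alpha < 1$) gives exactly $y \geq \frac{\alpha(1-p)}{(1-\alpha)p}x$. Since every step multiplies or divides by a strictly positive quantity, each implication reverses, which establishes the \emph{iff}.

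The only point requiring care is the degenerate case rather than any genuine difficulty. At the origin $(0,0)$ the denominator vanishes and precision is the indeterminate $0/0$, corresponding to a classifier that never predicts positive; precision is conventionally undefined there, so this point is simply excluded. I would also remark on the boundary $\alpha = 1$, where the slope $\frac{\alpha(1-p)}{(1-\alpha)p}$ becomes infinite and $\ell_\alpha$ degenerates to the $y$-axis $x=0$, matching the intuition that perfect precision forbids any false positive; the stated slope formula is thus understood for $\alpha < 1$. Apart from these edge cases the result is a direct algebraic equivalence, so I expect no serious obstacle.
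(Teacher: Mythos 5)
Your proof is correct and follows essentially the same route as the paper's: express $\nTP$ and $\nFP$ in terms of the ROC coordinates and prevalence, substitute into the precision formula, and rearrange by multiplying/dividing by positive quantities (the paper likewise notes validity requires $1-\alpha > 0$ and $p > 0$). Your additional attention to the degenerate origin point $(0,0)$ and the $\alpha = 1$ boundary is sound bookkeeping that the paper's terser proof omits, but it does not change the argument.
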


\begin{proof}
%Starting from the original precision bound, 
We write precision in terms of $x,y$ and $p$, using $\nTP = |\mathcal{P}|y$ and $\nFP = |\mathcal{N}|x$. Then, solve for $y$:
$$\dfrac{py}{py+(1-p)x} \geq \alpha
~\Rightarrow~
y \geq \dfrac{\alpha(1-p)}{(1-\alpha)p}x.
$$
This algebra is valid when $1{-}\alpha > 0, p > 0$.
\end{proof}

This inequality means that if we require a priori that classifiers have at least $\alpha$ precision, then rather than the entire unit square of ROC space, we only consider the region above the minimum precision line $\ell_{\alpha}: y=\frac{\alpha(1-p)}{(1-\alpha)p}x$ through the origin. When $p$ is small or when $\alpha$ is large, then we ignore a large portion of ROC space.

\subsection{Capacity Bound}

Beyond precision, another issue prevalent in applications is that the system that responds to alerts (positive predictions) has a maximal \emph{capacity}. Unlike precision, which is a rate, capacity is often an absolute cutoff due to limited resources. In hospital alert systems, capacity constraints arise due to limited time to tend to patients by existing staff.  In information retrieval, there may be limited time to handle relevant documents.
Throughout, we assume that \emph{not alarming} has no impact on capacity. This is reasonable in hospitals, as no resources beyond standard care need be allocated when there is no alarm.

\begin{defn}[Capacity Bound]
To be feasible, the total number of predicted positives a classifier produces must not exceed a provided \textbf{maximum capacity}:
\begin{align*}
\nTP + \nFP \leq \kappa     
\end{align*}
\end{defn}
Here, the value of $\kappa > 0$ can be set by stakeholders, indicating the maximum number of alerts that can be handled by the system given typical resources.

\begin{lem}[Capacity Bound in ROC Space]
Let $\mathcal{F}_{\tau}$ be a binarized classifier with ROC coordinates $(x,y)$ for dataset $\mathcal{D}$. If capacity bound $\kappa$ is satisfied, then

\begin{enumerate}[noitemsep,leftmargin=*]
\item The number of predicted positives is: $|\mathcal{P}|y+|\mathcal{N}|x$. 
\item We have $y \leq \textstyle \frac{1}{|\mathcal{P}|}(\kappa - |\mathcal{N}|x)$.
\end{enumerate}
\end{lem}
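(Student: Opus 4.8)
The plan is to convert the ROC coordinates back into raw counts and substitute directly into the capacity definition. The key observation is that the ROC coordinates are \emph{rates} normalized by the respective class sizes: the horizontal coordinate $x$ is the false positive rate $x = \nFP / |\mathcal{N}|$ and the vertical coordinate $y$ is the true positive rate $y = \nTP / |\mathcal{P}|$.

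First I would rearrange these two identities to recover absolute counts, writing $\nFP = |\mathcal{N}| x$ and $\nTP = |\mathcal{P}| y$. Adding them gives the total number of predicted positives $\nTP + \nFP = |\mathcal{P}| y + |\mathcal{N}| x$, which is exactly the claim in the first part. Note this identity holds unconditionally, independent of whether the capacity bound is met; the hypothesis that the capacity bound is satisfied is only invoked for the second part.

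For the second part, I would substitute this expression into the capacity bound $\nTP + \nFP \leq \kappa$, obtaining $|\mathcal{P}| y + |\mathcal{N}| x \leq \kappa$. Isolating the $y$ term yields $|\mathcal{P}| y \leq \kappa - |\mathcal{N}| x$, and dividing through by $|\mathcal{P}|$ gives the stated inequality $y \leq (\kappa - |\mathcal{N}| x)/|\mathcal{P}|$. The division preserves the inequality direction because $|\mathcal{P}| > 0$, which holds for any dataset with at least one positive example.

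Since each step is a direct substitution or a single algebraic rearrangement, I do not anticipate a genuine technical obstacle. The only point requiring care is getting the normalizations right: one must divide by the class-specific totals $|\mathcal{N}|$ and $|\mathcal{P}|$, rather than the full dataset size $\nALL$, when translating between rates and counts, and one must record that dividing by $|\mathcal{P}|$ is legitimate only when $|\mathcal{P}| > 0$. This mirrors the preceding precision-bound lemma, where an analogous rate-to-count substitution was the crux of the argument.
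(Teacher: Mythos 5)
Your proof is correct and follows essentially the same route as the paper: identify the number of predicted positives as $|\mathcal{P}|y + |\mathcal{N}|x$ via the rate-to-count conversion, substitute into the capacity bound, and solve for $y$. The only difference is that you spell out the conversion $\nTP = |\mathcal{P}|y$, $\nFP = |\mathcal{N}|x$ and the requirement $|\mathcal{P}| > 0$ explicitly, which the paper leaves implicit in the phrase ``by definition.''
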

We call the line in ROC space where this holds with equality the \textbf{maximum capacity line}, $\ell_{\kappa}$.

\begin{proof}
The number of predicted positives $\nTP {+} \nFP$ is $|\mathcal{P}|y{+}|\mathcal{N}|x$ by definition. Bound satisfaction means $|\mathcal{P}|y+|\mathcal{N}|x \leq \kappa$. Solving for $y$ completes the proof.
\end{proof}

\subsection{Feasible Region}

We now consider enforcing both precision and capacity constraints, along with some practical assumptions typical in our target applications. These constraints narrow down ROC space to a particular smaller region we call the feasible region. This is similar to what \citet{morasca2020assessment} call a ``region of interest'', though ours incorporates precision and capacity.

\parbox{0.98\linewidth}{
\begin{defn}[Practical Assumptions] In order to focus on the cases that arise in practical settings, we make some assumptions going forward (remaining edge cases are handled in App.~\ref{supp_casesForPolygons}). From here on, in the main paper, we assume our dataset, precision limit $\alpha$, and capacity limit $\kappa$ satisfy:
\begin{itemize}[leftmargin=4mm]
    \item $|\mathcal{P}| < |\mathcal{N}|$: Negatives are more common.
    \item $p<\alpha < 1.0$ : Precision is reasonable. 
    \item $0 < \kappa < |\mathcal{D}|$: Capacity is non-trivial.
    \item $t < \frac{\alpha |\mathcal{N}|}{\alpha |\mathcal{N}| + (1-\alpha)|\mathcal{P}|}$: Never-alarm has maximal cost.
\end{itemize}
\label{def:practical_assumptions}
\end{defn}
}
The strict inequality $\kappa <|\mathcal{D}|$ in item 3 ensures the always-alarm baseline is not feasible. The $t$ bound in the last item ensures that the never-alarm baseline maximizes cost for the feasible region (see App.~\ref{supp_isoperformanceCases}).

\parbox{0.97\linewidth}{
\begin{defn}[Feasible Classifier]
Let $\mathcal{F}_{\tau}$ be a classifier with ROC coordinates $(h,k)$. Given a dataset, precision limit $\alpha$, and capacity limit $\kappa$ that satisfy Def.~\ref{def:practical_assumptions}, we call $\mathcal{F}_{\tau}$ a \textbf{feasible classifier} and $(h,k)$ a \textbf{feasible point} if
$
\text{Prec} \geq \alpha ~\text{and}~ 
\nTP {+} \nFP \leq \kappa
$.
\end{defn}
}

\parbox{0.97\linewidth}{
\begin{defn}[Feasible Region]
%Let set $S$ define the constraints on classifiers for 
For a fixed dataset and limits $\alpha,\kappa$, the \textbf{feasible region} in ROC space is the set of all feasible points.
\end{defn}
} 

 \begin{figure*}[!t]
 \centering
 \begin{tabular}{c c c} 
     \circled{1} with $\alpha=0.15,  \kappa=900$
     &
     \circled{2} with $\alpha=0.15,  \kappa=3000$
     &
     \circled{3} with $\alpha=0.15,  \kappa=9100 $
     \\
     \includegraphics[width=.22\textwidth,keepaspectratio,clip]{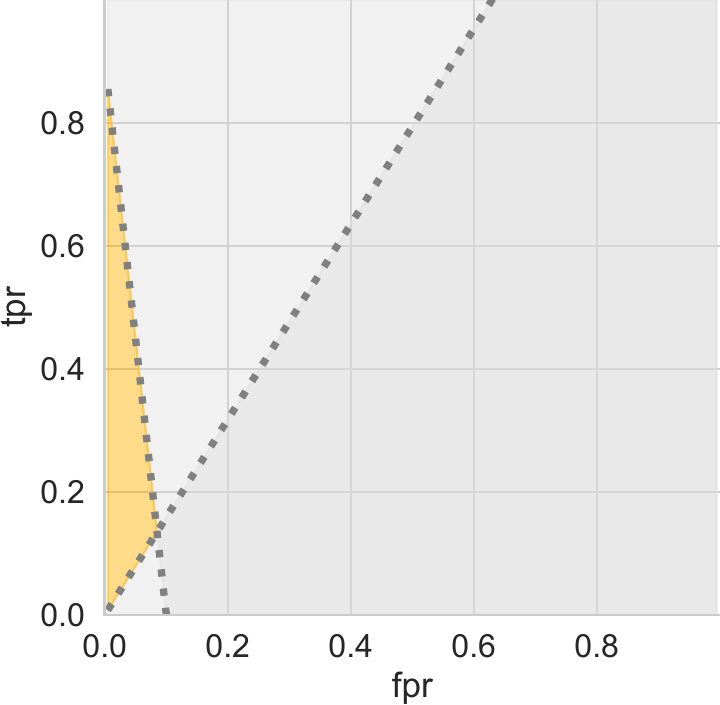}
     &   
     \includegraphics[width=.22\textwidth,keepaspectratio,clip]{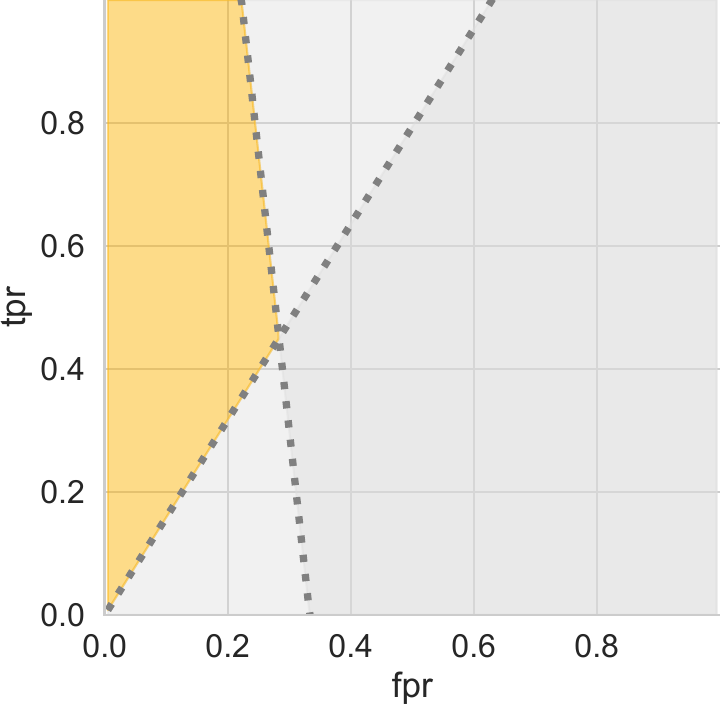}
     &
     \includegraphics[width=.22\textwidth,keepaspectratio,clip]{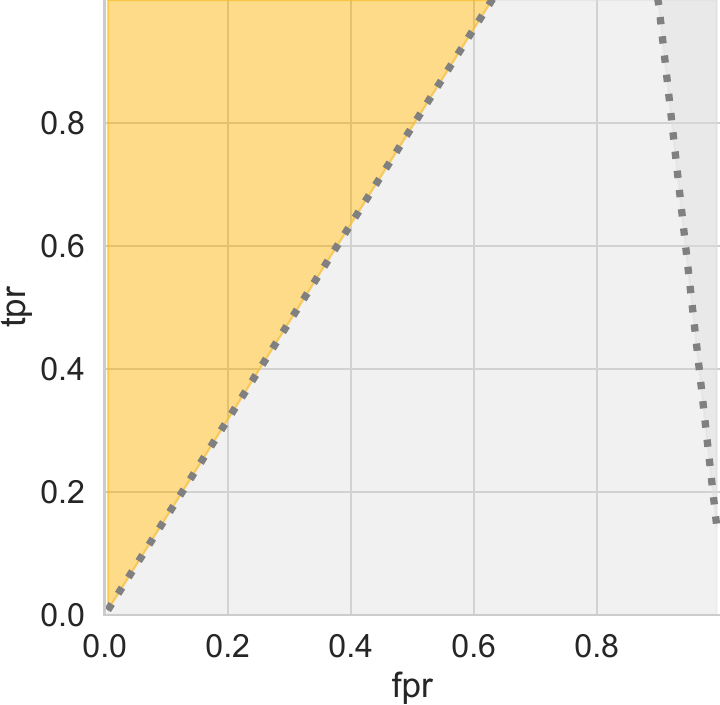} 
 \end{tabular}
 \caption{
     Cases for distinct feasible region polygons (colored in gold) under the practical conditions of Def.~\ref{def:practical_assumptions}.
     \\We set $|\mathcal{P}|=1000$, $|\mathcal{N}|=9000$ so $p = 0.1$, roughly matching the mortality alert applications in Sec.~\ref{sec:mimiciv_experiments}.
 }%endcaption
 \label{fig:polygon_cases}
 \end{figure*}
 
\subsection{Geometry of the Feasible Region}
\label{sec:geom_feasible_region}

We can define the geometry of the feasible region via the minimum precision line $\ell_{\alpha}$ and the maximum capacity line $\ell_{\kappa}$.
The line $\ell_{\alpha}$ has its y-intercept at the origin, and slope within $(1.0, +\infty)$ for $\alpha \in (p, 1.0)$ by Def.~\ref{def:practical_assumptions}.
Similarly, $\ell_{\kappa}$ has fixed slope $-\frac{|N|}{|P|}$ (always ${<}-1$), with free y-intercept $\frac{\kappa}{|\mathcal{P}|}$ for $\kappa \in (0, |\mathcal{D}|)$. Where these lines intersect determines what kind of polygon the feasible region forms. 

The 3 cases below exhaustively cover all possible $\kappa$ for a given feasible value of $\alpha \in (p, 1.0)$ subject to the practical assumptions made in Definition~\ref{def:practical_assumptions}, as diagrammed in Figure~\ref{fig:polygon_cases}. 
\begin{itemize}
    \item {\small \circled{1}} $0 < \kappa < |\mathcal{P}|$: Triangle that excludes (0,1). Here, $\ell_{\alpha}$ and $\ell_{\kappa}$ intersect inside ROC space. The capacity bound eliminates the perfect classifier.
    \item {\small \circled{2}} $|\mathcal{P}| \leq \kappa < \frac{1}{\alpha}|\mathcal{P}|$. Quadrilateral including (0,1). $\ell_{\alpha}$ and $\ell_{\kappa}$ intersect inside ROC space, with capacity allowing the perfect classifier.
    \item {\small \circled{3}} $\frac{1}{\alpha}|\mathcal{P}| \leq \kappa < |\mathcal{D}|$. Right Triangle including (0, 1). $\ell_{\alpha}$ and $\ell_{\kappa}$ intersect above ROC space (above the $y{=}1$ upper edge), so the capacity constraint is made redundant and precision alone dominates. 
\end{itemize}

Edge cases that violate the assumptions of Def.~\ref{def:practical_assumptions} are handled in App.~\ref{supp_casesForPolygons}. Focusing on 3 main cases here simplifies presentation for most practical situations.

We can now define the polygon enclosing the feasible region in terms of specific vertices for each case:
\parbox{0.98\linewidth}{
\begin{defn}[Notation for Vertices]
Let $i,j \in \{0,1\}$ and let $\beta \in \{\alpha, \kappa\}$. Define the following 9 vertices for all possible $i,j,\beta$
\begin{itemize}[leftmargin=12mm]
\item[(1-4)] $v_{ij}$ is the intersection of $x=i$ and $y=j$
\item[(5-6)] $v_{\beta j}$ is the intersection of $\ell_\beta$ and $y=j$
\item[(7-8)] $v_{i\beta}$ is the intersection of $x=i$ and $\ell_\beta$.
\item[(9)] $v_{\alpha \kappa}$ is the intersection of $\ell_\alpha$ and $\ell_\kappa$.
\end{itemize}
\label{def:vertices}
\end{defn}
}

\parbox{0.98\linewidth}{
\begin{defn}[Feasible Region Polygon]
Moving clockwise from the origin, the bounding vertices of the region of interest are $v_{00}v_{0 \kappa}v_{\alpha \kappa}$ in case 1, $v_{00}v_{01}v_{\kappa 1}v_{\alpha \kappa}$ in case 2, and $v_{00}v_{01}v_{\alpha 1}$ in case 3.
\end{defn}
}

\textbf{Area of Feasible Region.}
We can compute the area of the feasible region by applying the well-known shoelace formula~\citep{leeShoelaceFormulaConnecting2017,zwillingerCRCPolygonFormula2018}, which computes the area of a polygon given its boundary vertices (see App.~\ref{supp:geometry_cases_satisfying}).

\textbf{Dependence on Class Balance.} The slopes of both lines $\ell_{\alpha}$ and $\ell_{\kappa}$ and thus the feasible region's area depends explicitly on the sizes of $\mathcal{P}$ and $\mathcal{N}$. These sizes must be fixed and known prior to model evaluation. 
This dependence is fundamental to evaluating precision and capacity; neither can be checked solely from a normalized confusion matrix. 
% Drifts in prevalence can affect the calculation of the partial VOROS at all stages of the calculation.

\section{PARTIAL AREA AND VOLUME}
\label{sec:partial_area_volume}

\begin{figure*}[!t]
%\includesvg[width=2\columnwidth]{figures/equation(2).svg}
    \includegraphics[width=2\columnwidth]{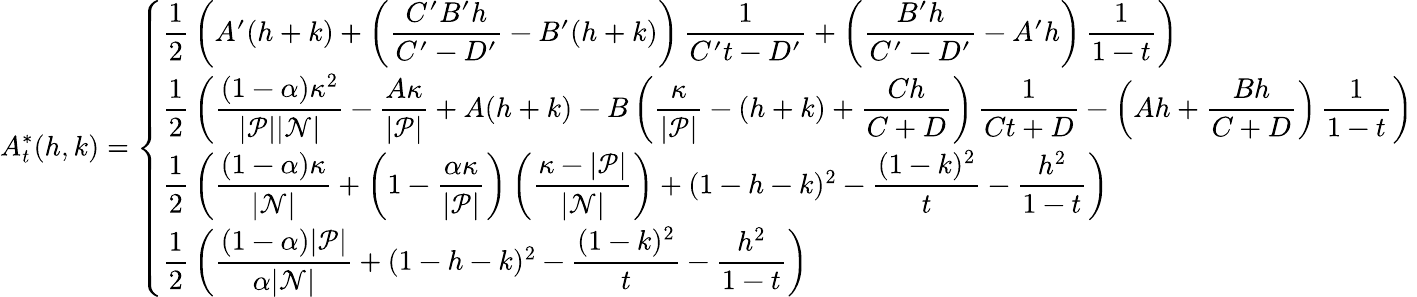}
\caption{Formula for partial area of lesser classifiers for a given $(h,k)$ location in ROC space and cost parameter $t \in [0, 1]$. The 4 cases here depend on the geometry of the feasible region and the iso-performance line $\ell_t$ through $(h,k)$. The constants $A$,$B$,$C$, $D$, $A'$,$B'$, $C'$, and $D'$ are explained in Lemma~\ref{lem:partial_area_form} and derived explicitly in Lemma~\ref{lem:formula_for_partial_area}.
 The cases above are, in order, the areas of $v_{00}v_{\alpha t}v_{0t}$, $v_{00}v_{\alpha\kappa}v_{\kappa t}v_{0t}$, $v_{00}v_{\alpha \kappa}v_{\kappa 1}v_{t1}v_{0t}$, and $v_{00}v_{\alpha 1}v_{t1}v_{0t}$ respectively, using the vertex definitions in Def.~\ref{def:vertices}.
App.~\ref{supp_isoperformanceCases} identifies which case is needed for given inputs.
}
\label{fig:equation_partial_area}
\end{figure*}

%\parbox{0.97\linewidth}{
We now study how to perform \emph{cost-aware} ranking of classifiers in the feasible region.
\citet{ratigan2024voros} introduced two key ideas for cost-aware ranking in their unconstrained setting: 
the notion of a lesser classifier and the area of lesser classifiers in ROC space.
In Sec.~\ref{subsec:area}, we extend these ideas to enforce a minimum precision $\alpha$ and a maximum capacity $\kappa$. Next, in Sec.~\ref{subsec:volume}, we explain how to lift this analysis to 3D space $(x,y,t)$. Finally, we show how to integrate over a given distribution $p(t)$ to compute a partial volume.

\subsection{Partial Area of Lesser Classifiers}
\label{subsec:area}

\parbox{0.97\linewidth}{
\begin{defn}[Lesser Classifier]
Let $\mathcal{F}_1, \mathcal{F}_2$ be feasible binarized classifiers with ROC coordinates $(h_1,k_1),(h_2,k_2)$ on dataset $\mathcal{D}$. Then $\mathcal{F}_1$ is a \textbf{lesser classifier} of $\mathcal{F}_2$ at cost parameter $t$ if it has worse cost: $\text{Cost}_t(h_1,k_1) > \text{Cost}_t(h_2, k_2)$.
\label{lesser}
\end{defn}
}

\parbox{0.97\linewidth}{
\begin{defn}[Partial Area of Lesser Classifiers]
Let $\mathcal{F}_\tau$ be a feasible classifier and let $t$ be a fractional cost parameter. The \textbf{partial area of lesser classifiers}, $A_t^*(\mathcal{F}_{\tau})$ is the area of the portion of the feasible region of ROC space consisting of $\mathcal{F}_{\tau}$'s  lesser feasible classifiers using cost parameter $t$.
\end{defn}
}

\parbox{0.97\linewidth}{
By Definition~\ref{lesser} the partial area of lesser classifiers is cost monotone. Under some values of $\alpha$ and $\kappa$, feasible classifiers may occupy only a small region of ROC space. The numerical value of the partial area may thus be small even for the best feasible classifiers. For a metric whose value is easy to interpret as good or bad regardless of $\alpha,\kappa$, we recommend a normalization:
}

\parbox{0.97\linewidth}{
\begin{defn}[Normalized Partial Area]
Let $\mathcal{F}_\tau$ be a feasible classifier and let $t$ be a fractional cost parameter. The \textbf{normalized partial area} of lesser classifiers is the ratio of the partial area $A^*_t( \mathcal{F}_\tau )$ to the partial area of all feasible points in ROC space.
\end{defn}
}

We can compute the denominator in this ratio directly using the area of the feasible region (see App.~\ref{supp:geometry_cases_satisfying}).
If the perfect classifier located at $(0,1)$ in ROC space is feasible, then this denominator is \emph{equivalent} to the partial area of lesser classifiers for the perfect classifier.

Our notion of normalized partial area of lesser classifiers is similar in spirit to the ratio of relevant areas defined by~\citet{morasca2020assessment}. 

\begin{lem}[Form of the Partial Area]
The partial area of lesser classifiers is a sum of quotients of linear functions of $t$ as given in Figure 3.
\label{lem:partial_area_form}
\end{lem}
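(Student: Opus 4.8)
The plan is to prove each of the four formulas displayed in Figure~\ref{fig:equation_partial_area} directly, by recognizing that the region of lesser feasible classifiers is in every case a polygon whose vertices I can write explicitly as functions of $t$, and then applying the shoelace formula. First I would fix a case. By Definition~\ref{lesser}, the lesser classifiers of $(h,k)$ at cost $t$ are exactly the feasible points on the high-cost side of the iso-performance line $\ell_t$ through $(h,k)$, i.e. those $(x,y)$ with $t(x-h)-(1-t)(y-k)\ge 0$. Intersecting this half-plane with the feasible-region polygon — whose shape (triangle, quadrilateral, or right triangle) is pinned down by the case analysis of Section~3 — yields one of the four polygons named in the caption of Figure~\ref{fig:equation_partial_area}, namely $v_{00}v_{\alpha t}v_{0t}$, $v_{00}v_{\alpha\kappa}v_{\kappa t}v_{0t}$, $v_{00}v_{\alpha\kappa}v_{\kappa 1}v_{t1}v_{0t}$, and $v_{00}v_{\alpha 1}v_{t1}v_{0t}$. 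The combinatorial bookkeeping is to confirm these are the only possibilities as $\ell_t$ sweeps across the region; I would argue this by tracking which two boundary edges $\ell_t$ crosses as its slope $\tfrac{t}{1-t}$ increases from $0$ to $\infty$, deferring the precise $t$-thresholds and degenerate transitions to the appendix.

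The heart of the argument is computing the vertex coordinates. The fixed vertices $v_{00}$, $v_{01}$, $v_{\alpha\kappa}$, $v_{\kappa 1}$, $v_{\alpha 1}$ do not depend on $t$. For the moving vertices $v_{0t}$, $v_{t1}$, $v_{\alpha t}$, $v_{\kappa t}$ I would write $\ell_t$ in the implicit form $tx-(1-t)y+(k-t(h+k))=0$, whose coefficients are \emph{affine} in $t$, and intersect it with the relevant fixed line ($x=0$, $y=1$, $\ell_\alpha$, or $\ell_\kappa$). Solving the resulting $2\times 2$ system by Cramer's rule, both numerators and the determinant are affine in $t$, so each coordinate of a moving vertex is a ratio of affine functions of $t$, with the two coordinates of a given vertex sharing a common denominator.

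With the vertices in hand, I would apply the shoelace formula $\tfrac12\sum_i(x_i y_{i+1}-x_{i+1}y_i)$. The structural observation that makes the form transparent is that each lesser-region polygon has exactly one edge lying along $\ell_t$, and only its two endpoints move with $t$; every shoelace term pairing a fixed vertex with a moving vertex is a ratio of affine functions, while the single term for the $\ell_t$-edge multiplies two moving coordinates. Summing these finitely many terms and clearing denominators therefore exhibits $A_t^*(\mathcal{F}_\tau)$ as a rational function of $t$ built from the affine vertex expressions, which is the claimed form, and collecting over a common denominator recovers the case-by-case formula of Figure~\ref{fig:equation_partial_area}. The hard part will be this final simplification together with the case-selection bookkeeping: verifying that the shared $(1-t)$ factors arising from the slope $\tfrac{t}{1-t}$ cancel so that the raw shoelace output reduces to exactly the displayed expressions, and confirming that the four cases partition all feasible $(h,k,t,\alpha,\kappa)$ configurations. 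The geometric identification of the correct polygon per case, rather than the algebra itself, is where I expect the most care to be required.
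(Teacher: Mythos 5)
Your route coincides with the paper's own proof in its main ingredients: intersect the high-cost half-plane of $\ell_t$ with the feasible region, identify the result as one of the four polygons listed in Figure~\ref{fig:equation_partial_area}, compute the $t$-dependent vertices $v_{0t}$, $v_{t1}$, $v_{\alpha t}$, $v_{\kappa t}$ as ratios of affine functions of $t$, and apply the shoelace/determinant formula. The gap is in the step you call ``combinatorial bookkeeping,'' which you propose to settle by sweeping the slope $\tfrac{t}{1-t}$ of $\ell_t$ from $0$ to $\infty$. Over that full range the four polygons are \emph{not} exhaustive: as soon as $\tfrac{t}{1-t}$ exceeds the slope $\tfrac{\alpha(1-p)}{(1-\alpha)p}$ of $\ell_\alpha$, the iso-performance line through a feasible $(h,k)$ crosses below $\ell_\alpha$ to the left of $(h,k)$ (its $y$-intercept $k-\tfrac{t}{1-t}h$ can even be negative), so the lesser region detaches from the $y$-axis and from the origin entirely and becomes a wedge anchored at $v_{\alpha t}$ --- for instance the triangle $v_{\alpha t}v_{\kappa t}v_{\alpha \kappa}$ in case 1 --- which is none of the four shapes and has a different area formula. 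In that regime your claims that $v_{0t}$ is always a vertex and that $v_{00}$ belongs to every lesser region are false.

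What rescues exhaustiveness is the last practical assumption of Definition~\ref{def:practical_assumptions}, namely $t < \tfrac{\alpha|\mathcal{N}|}{\alpha|\mathcal{N}|+(1-\alpha)|\mathcal{P}|}$, which says precisely that $\ell_t$ is shallower than $\ell_\alpha$. The paper isolates this as Lemma~\ref{lem:never_alarm}: the iso-performance line through the never-alarm point $(0,0)$ stays below $\ell_\alpha$, hence every $\ell_t$ through a feasible point exits the feasible region on the left through the edge $x=0$. This is what forces $v_{00}$ and $v_{0t}$ to be vertices of every lesser-classifier polygon and reduces the bookkeeping to how $\ell_t$ exits on the right (through $\ell_\alpha$, through $\ell_\kappa$, or through $y=1$), yielding exactly the four cases, with the admissible sub-list depending on which of the three feasible-region shapes one is in. Once you add this restriction, the rest of your argument --- Cramer's rule producing affine-over-affine coordinates of the forms $A'-\tfrac{B'}{C't-D'}$ and $A+\tfrac{B}{Ct+D}$, one moving edge per polygon, shoelace, and simplification --- is sound and is essentially the computation carried out in the paper's appendix.
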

\begin{proof}
A full proof is in App.~\ref{supp_isoperformanceCases}. The area depends on how the iso-performance line $\ell_t$ through $(h,k)$ intersects the feasible region. The key insight is that the $x$-coordinate of $\ell_t \cap \ell_{\alpha}$ can be expressed as $A'+\frac{B'}{C't-D'}$ and that of $\ell_t \cap \ell_{\kappa}$ as $A+\frac{B}{Ct+D}$, for suitable scalars $A,B,C,D, A', B', C', D'$.
\end{proof}

\subsection{Partial Volume over the ROC surface}
\label{subsec:volume}

When stakeholders have a range or distribution of cost parameters $t$ in mind as well as many possible thresholds $\tau$, \citet{ratigan2024voros} showed how integrating the area of lesser classifiers over that range of $t$ leads to a  performance metric called the volume over the ROC surface (VOROS). 
Importantly, at each $t$, they sensibly use the largest area over all thresholds $\tau$.
We now extend this idea to a \emph{partial volume} that only considers the feasible region imposed by precision and capacity constraints.

\begin{defn}[Partial VOROS or PV]
Let $\mathcal{F}$ be a score-producing classifier, let $p(t)$ be a valid probability density function over [0,1] for the cost parameter $t$, and let $\alpha,\kappa$ define precision and capacity limits so that the feasible region is well-defined with area $A^*$. 
Then the \textbf{partial volume over the ROC surface} (partial VOROS) is the normalized partial area of lesser classifiers, averaged over the provided cost distribution:
$$PV(\mathcal{F}) = \dfrac{1}{A^*} \int_{t=0}^1 p(t) \max_{\tau} (A_t^*(\mathcal{F}_{\tau})) dt.
$$
\parbox{0.97\linewidth}{
Here, the maximum is taken over the set of thresholds for $\mathcal{F}$ that produce distinct feasible points $(h,k)$.
}
\end{defn}

%% MCH: Kept old version just in case
% \begin{defn}[Partial VOROS]
% Let $\mathcal{F}$ be a score-producing classifier, let $[a,b] \subseteq [0,1]$ be a cost parameter range, and let $\alpha,\kappa$ define precision and capacity limits so that the feasible region is well-defined with area $A^*$. Then the \textbf{partial volume over the ROC surface (VOROS)} is the normalized partial area of lesser classifiers, averaged over the provided range:
% $$PV(\mathcal{F}) = \dfrac{1}{(b-a)A^*}\int_{t=a}^b\max_{\tau} (A_t^*(\mathcal{F}_{\tau})) dt.
% $$
% Here, the maximum is taken over the set of thresholds for $\mathcal{F}$ that produce distinct feasible points $(h,k)$.
% \end{defn}

Overall, PV is a higher-is-better performance metric.
Its range is between 0.0 and 1.0 regardless of the dataset or constraints $\alpha,\kappa$. 
The best possible PV is 1.0, achieved by the highest feasible classifier on the $y$-axis.
The worst value of 0.0 comes from the never-alarm baseline at (0,0), assuming Def.~\ref{def:practical_assumptions} holds.

PV can be viewed as an expectation over the distribution $p(t)$,
$PV = \frac{1}{A_*} \mathbb{E}_{p(t)}[ \max_{\tau} (A_t^*(\mathcal{F}_{\tau}))  ]$.
Users could simply set $p(t)$ to a uniform distribution over the fractional cost $t$, perhaps confined to a task-relevant interval $[a,b] \subseteq [0, 1]$. Stakeholders may prefer to define a distribution over cost ratio, $p(\cRatio)$, which implies a distribution over $t$ via Def.~\ref{defn:t}.
The User Guide (App.~\ref{supp:user_guide}) gives practical advice on computing PV.

\textbf{Runtime.}
The maximum in the definition is computable in $O(h_c)$-time from the convex hull of a ROC curve, where $h_c$ is the number of feasible points in the curve's convex hull and thus the number of possible thresholds. 
See App.~\ref{supp_compComplexity} for an algorithm.
Recall that $A^*_t$ is monotonically decreasing in $\text{Cost}_t$. We use this fact to speed up our calculation.
Instead of directly assessing area $A^*_t(\mathcal{F}_{\tau})$ at each of the $h_c$ thresholds $\tau$,
we can find the threshold that minimizes $\text{Cost}_t$. Thus, only one area calculation is needed per $t$ value.

\textbf{PV and ROC Dominance.}
Similar to the VOROS and the traditional area under the ROC curve, the partial VOROS will always assign a higher value to a ROC curve that dominates.

\parbox{0.97\linewidth}{
\begin{lem}[Dominance]
Let $y=f_1(x)$, $y=f_2(x)$ be two ROC curves. If for all $x$, $f_1(x) \geq f_2(x)$, then the VOROS and partial VOROS of $f_1$ will be higher than that of $f_2$.
\end{lem}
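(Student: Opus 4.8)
The plan is to prove the stronger pointwise-in-$t$ claim and then integrate. Write $M_f(t)=\max_{\tau}A_t^*(\mathcal F_\tau)$ for the integrand attached to a curve $y=f(x)$. Since the normalizing factor $\tfrac{1}{(b-a)A^*}$ in the definition of partial VOROS depends only on $[a,b]$ and the feasible region, \emph{not} on the classifier, it is identical for $f_1$ and $f_2$; so it suffices to show $M_{f_1}(t)\ge M_{f_2}(t)$ for every $t\in[a,b]$ and integrate. The ordinary VOROS is the special case in which the feasible region is the whole unit square, so it falls out of the same argument with the precision and capacity checks rendered vacuous. First I would record the monotonicity already noted in the text: for fixed $t$, $A_t^*$ is non-increasing in $\mathrm{Cost}_t$, because $A_t^*(P)$ is exactly the area of feasible points whose cost exceeds $\mathrm{Cost}_t(P)$, and lowering that threshold can only enlarge the super-level set. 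Hence $M_f(t)$ equals $A_t^*$ evaluated at the minimum-cost feasible point of $f$, and the lemma reduces to the geometric claim
$$\min_{x:\,(x,f_1(x))\ \mathrm{feasible}}\mathrm{Cost}_t\big(x,f_1(x)\big)\ \le\ \min_{x:\,(x,f_2(x))\ \mathrm{feasible}}\mathrm{Cost}_t\big(x,f_2(x)\big).$$

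Let $(x_2,y_2)$ with $y_2=f_2(x_2)$ attain the right-hand minimum. The natural candidate on $f_1$ is the point at the same abscissa, $(x_2,f_1(x_2))$: since $f_1(x_2)\ge y_2$ and $1-t\ge 0$ it has cost at most $\mathrm{Cost}_t(x_2,y_2)$, and since $f_1(x_2)\ge y_2\ge \tfrac{\alpha(1-p)}{(1-\alpha)p}x_2$ it still satisfies the precision bound. The one place this candidate can fail is the capacity bound, and this is the main obstacle: raising $y$ increases the predicted-positive count $|\mathcal P|y+|\mathcal N|x$, so $(x_2,f_1(x_2))$ may lie \emph{above} $\ell_\kappa$.

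To repair this I would slide down $f_1$ toward the origin. The quantity $g(x)=|\mathcal P|f_1(x)+|\mathcal N|x$ is continuous and non-decreasing along $f_1$ and equals $0<\kappa$ at the origin end (including along the vertical convex-hull edge at $x=0$), so by the intermediate value theorem there is an $x_1\in[0,x_2]$ at which $(x_1,f_1(x_1))$ lies exactly on $\ell_\kappa$, hence is capacity-feasible. To bound its cost I would chain two one-line inequalities, writing $y_\kappa(x)=\tfrac{\kappa-|\mathcal N|x}{|\mathcal P|}$ for the height of $\ell_\kappa$: (i) on $\ell_\kappa$, $\mathrm{Cost}_t$ has $x$-derivative $t+(1-t)\tfrac{|\mathcal N|}{|\mathcal P|}>0$, so moving from $x_2$ down to $x_1\le x_2$ along $\ell_\kappa$ does not increase cost; and (ii) $(x_2,y_\kappa(x_2))$ on $\ell_\kappa$ has $y_\kappa(x_2)\ge y_2$ precisely because $(x_2,y_2)$ is capacity-feasible, so it has cost no larger than $(x_2,y_2)$. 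Composing these gives $\mathrm{Cost}_t(x_1,f_1(x_1))\le\mathrm{Cost}_t(x_2,y_2)$, which is what the reduced claim needs.

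It remains to confirm the constructed point also meets precision, and here I would invoke the feasible-region geometry. When $\ell_\kappa$ is active (cases 1--2), every feasible point has abscissa at most that of $v_{\alpha\kappa}$: for $x$ beyond $v_{\alpha\kappa}$ the increasing line $\ell_\alpha$ overtakes the decreasing line $\ell_\kappa$, so the half-planes $y\ge\tfrac{\alpha(1-p)}{(1-\alpha)p}x$ and $y\le y_\kappa(x)$ have empty intersection. Thus $x_1\le x_2\le x_{\alpha\kappa}$, and on $\ell_\kappa$ to the left of $v_{\alpha\kappa}$ one is automatically above $\ell_\alpha$, so precision holds; in case 3 the capacity line never binds over the feasible region, so the same-abscissa candidate works directly and no sliding is needed. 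I expect the capacity repair together with this precision check to be the only delicate steps. The remaining items are routine: the empty-feasible-set case (where $M_{f_2}\equiv 0$ and the inequality is trivial), passing to the convex hull so the swept points genuinely lie on the admissible curve, and the observation that $f_1=f_2$ yields equality, so the claimed conclusion is ``at least as large'' rather than strict.
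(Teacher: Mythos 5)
Your proof is correct, and its skeleton---fix $t$, show the best feasible point on $f_1$ costs no more than the best feasible point on $f_2$, invoke cost-monotonicity of $A_t^*$, then integrate (the normalizer $\tfrac{1}{(b-a)A^*}$ being classifier-independent)---is exactly the skeleton of the paper's argument. The paper's entire proof, however, is one sentence: since $f_1$ lies above $f_2$, ``the best performing point on $f_2(x)$ will be in the area of lesser classifiers of some point on $f_1(x)$.'' That sentence is immediate for the ordinary VOROS (take the point of $f_1$ at the same abscissa; raising $k$ only lowers cost), but for the \emph{partial} VOROS it silently skips the issue you isolated: the same-abscissa point $(x_2,f_1(x_2))$ can violate the capacity bound, since raising $y$ raises $|\mathcal{P}|y+|\mathcal{N}|x$, in which case it is not a feasible comparison point at all. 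Your repair---slide along $f_1$ (using that the predicted-positive count is continuous and monotone along an ROC path and vanishes at the origin) to land exactly on $\ell_\kappa$ at some $x_1\le x_2$, then chain the two inequalities (cost increases with $x$ along $\ell_\kappa$; capacity-feasibility of $(x_2,y_2)$ places $(x_2,y_\kappa(x_2))$ at or above it) and verify precision from the geometry at $v_{\alpha\kappa}$---is precisely the content needed to make the paper's sentence true in cases 1 and 2, and your observation that precision implies capacity in case 3 completes the case analysis. So: same approach, but your write-up actually proves the feasibility step the paper's one-line proof asserts; the only caveats are ones you already flagged yourself (the inequality is non-strict when the curves touch, despite the lemma's ``higher'' phrasing, and interpolated points of the path must count as admissible for the sliding step to yield a legitimate operating point).
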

}
\begin{proof}
Since the curve for $f_1(x)$ is above and to the left of the curve $f_2(x)$, for any fractional cost parameter $t$, the best performing point on $f_2(x)$ will be in the area of lesser classifiers of some point on $f_1(x)$. 
\end{proof}

\begin{figure}[!t]
\begin{tabular}{c}
\includegraphics[width=.45\textwidth]{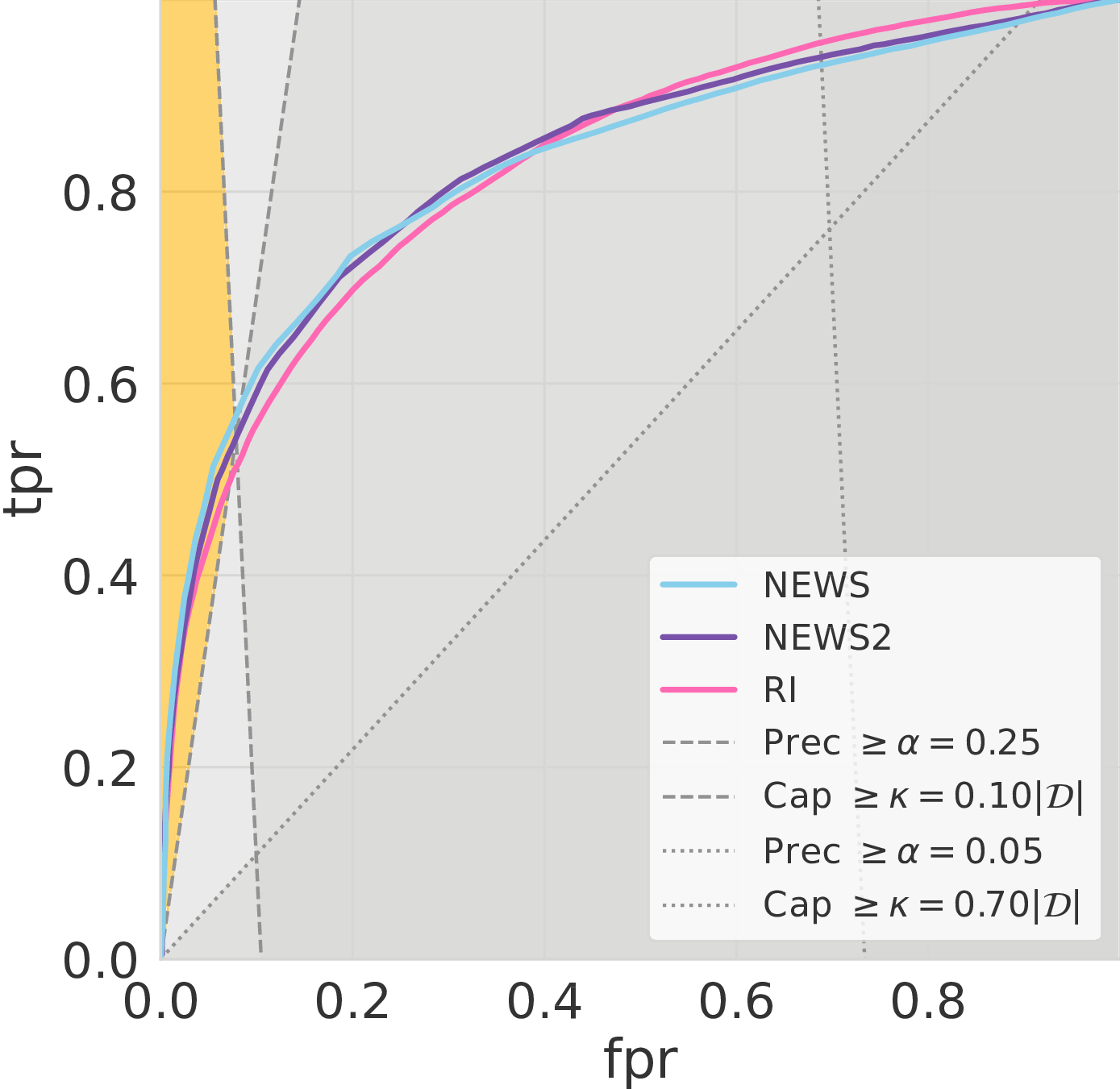}
\\
\includegraphics[width=.45\textwidth]{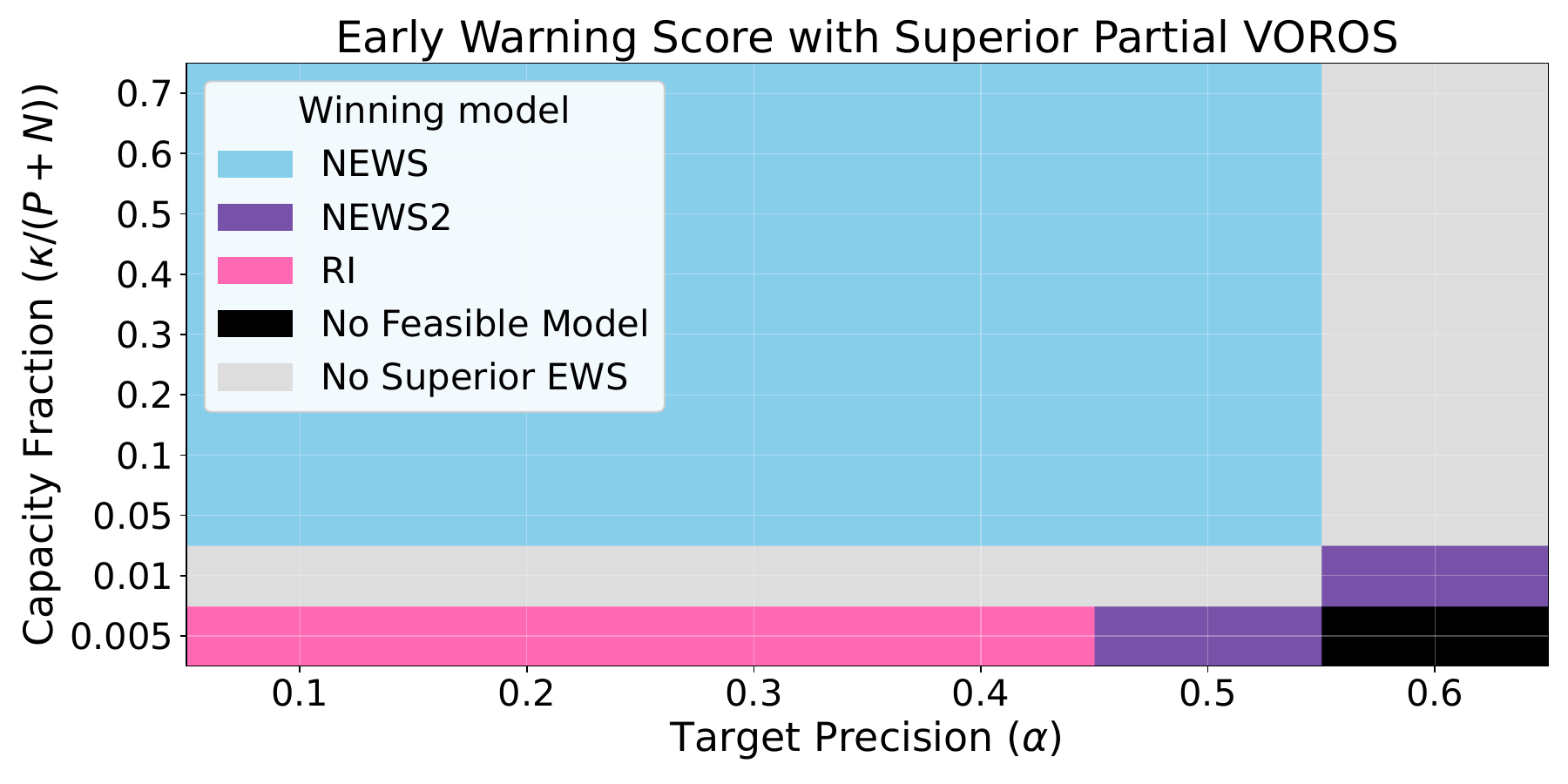}
\end{tabular}
\vspace{-3mm} % HACK WHITESPACE
\caption{
\textbf{Re-analysis of Early Warning Scores for Deterioration}, using ROC curves from \citeauthor{edelsonEarlyWarningScores2024}. 
Top: ROC curves with strict (-- lines) and more lenient (:) constraints. 
Bottom: Heatmap of showing which model has the best PV score over possible $\alpha, \kappa$ limits.
}%endcaption
\label{fig:results_edelson}
\end{figure}

\section{MODEL EVALUATION WITH PV}
\label{sec:edelson_experiments}
We examine partial VOROS as a post-hoc metric for existing clinical early warning scores. We re-analyze ROC curves published by \citet{edelsonEarlyWarningScores2024}, measuring the performance of widely-used risk scores on 362,926 patient stays in 7 hospitals in Connecticut, USA. The outcome of interest is deterioration, meaning transfer to the ICU or death within 24 hours. The prevalence $p$ is 4.6\% in \citeauthor{edelsonEarlyWarningScores2024}'s data.

For this case study, we focus on 3 scoring methods: NEWS~\citep{royalcollegeofphysiciansNationalEarlyWarning2012},
NEWS2~\citep{royalcollegeofphysiciansNationalEarlyWarning2017},
and the Rothman Index~(RI, \citet{rothmanDevelopmentValidationContinuous2013}).
We visualize ROC curves in Fig.~\ref{fig:results_edelson}: each curve crosses others, and it is thus imperative to consider costs and constraints to determine the best operating point of all methods.
We consider capacity $\kappa$ from $0.5\% - 70\%$ of $|\mathcal{D}|$ and  precision limits $\alpha \in (0.1, 0.6)$.
When calculating partial VOROS, we assume a uniform distribution over cost ratios $C_{\text{FP:FN}}$ that map to $t \in (0.01, 0.7)$, so that Def.~\ref{def:practical_assumptions} is satisfied. 

Over a range of $\alpha, \kappa$ limits, we plot a heatmap in Fig.~\ref{fig:results_edelson} indicating where different scoring methods have clear wins and regions where they are indistinguishable (normalized PV is within 0.01).
We find that NEWS2 preforms best at high $\alpha$ and low capacity, RI has superior recall at lower $\alpha$ and low capacity, and NEWS performs best once capacity is greater than 5\% of all examples.
\citet{edelsonEarlyWarningScores2024} previously concluded that NEWS ``outperformed'' the two other scores here. Our analysis with partial VOROS adds nuance: Each method could be a clear winner depending on precision and capacity limits.

\section{MODEL SELECTION WITH PV}
\label{sec:mimiciv_experiments}
We now illustrate how our PV metric can be used for model selection, not just post-hoc evaluation.
We will show how different performance metrics can lead to distinct selections for the best ROC curve and best threshold on that curve on a  healthcare modeling task. 
We focus on in-hospital mortality prediction with the MIMIC-IV~\citep{johnsonMIMICIVFreelyAccessible2023}
and eICU~\citep{pollardEICUCollaborativeResearch2018} datasets.
We aim for competitive \emph{reproducible} classifiers, not state-of-the-art on past benchmarks.
We'll assess how cost-aware selection strategies like our PV perform against cost-unaware alternatives, in terms of the ultimate cost on test data.

\textbf{Task Description.}
For the MIMIC-IV task, we follow~\citeauthor{rath2022optimizing}. 
At each ICU patient-stay we observe basic facts at admission (age, insurance type, weight, gender) as well as time-varying signals over the first 48 hours (6 vitals and 7 labs).
%App.~\ref{supp_mimiciv}),
The prediction task is to identify if the patient will die during the rest of their hospital stay.
We use an open-source pipeline~\citep{guptaExtensiveDataProcessing2022} to obtain a cohort of patient-stays, each represented as a 0-48 hr time series with measured values every 2 hours.
We keep only stays produced by \citeauthor{guptaExtensiveDataProcessing2022}'s code with at least 3 vitals or labs measured in the last 16 hours.
The eICU data task is also mortality prediction after 48 hours. Detailed setups for both tasks are in App.~\ref{supp_mimiciv}.

%Out[33]: ((15474, 14), (7802, 14), (7861, 14))
% In [30]: te_df['mortality_outcome'].mean()
% Out[30]: 0.10749268540898105
% In [31]: va_df['mortality_outcome'].mean()
% Out[31]: 0.10305049987182774
% In [32]: df['mortality_outcome'].mean()
% Out[32]: 0.10372237301279566
For MIMIC-IV, our train/valid./test sets contain 15474/7802/7861 patient-stays with prevalence $p$ of .104/.103/.107. This split was done by patient-id in a label-stratified way. We favor  larger test sets than usual to be sure we can measure precision well despite low prevalence. 
For eICU, we have 38532/19267/19267 ICU-stays with prevalence $p$ of 0.116/0.114/0.114.
%Our train/valid./test sets contain 15474/7802/7861 patient-stays with prevalence $p$ of .104/.103/.107. This split was done by patient-id in a label-stratified way. We favor  larger test sets than usual to be sure we can measure precision well despite low prevalence. 

\textbf{Featurization.}
For MIMIC-IV, for each of 13 time-varying univariate channels, we extract features that represent 7 summary functions (was ever measured, time since last measurement, mean, variance, min, median, max, slope) over 3 windows (0-48 hours, 24-48 hours, 32-48 hours). Missing values in extracted features are filled with the population mean, then rescaled to the 0.0-1.0 range.
For eICU, we use similar processing of the features provided by \citet{PhysioNet-mimic-eicu-fiddle-feature-1.0.0}.

% LR: 80 = 4 * 4 * 5
% MLP: 24 = 2 * 2 * 3 * 2
% RF: 36 = 4 * 3 *3 
\textbf{Classifiers.}
We examine logistic regression, multi-layer perceptron, and random forests, using sklearn~\citep{pedregosaScikitlearnMachineLearning2011}. 
For each, we conduct an extensive grid search of 24-80 hyperparameter configurations to avoid overfitting (details in App.~\ref{supp_mimiciv}).
% Broadly, we find our reported AUROCs around 0.8 match or beat similar models trained by \citeauthor{guptaExtensiveDataProcessing2022} or \citeauthor{rath2022optimizing}.

\textbf{Scenarios for Costs and Constraints.}
We consider two scenarios for costs and constraints.
In each one, we fix particular $\alpha, \kappa$ bounds, and define a distribution over the cost ratio $\cRatio$, which implies a distribution over $t$ via the invertible mapping in Def.~\ref{defn:t}.
\begin{enumerate}[leftmargin=*]
    \item The first scenario is relatively permissive. We fix $\alpha=0.15, \kappa = 0.5|\mathcal{D}|$, and model the cost ratio as $\cRatio \sim \text{Unif}(\frac{1}{9}, \frac{1}{6})$. Here, false negatives are more costly than false positives, but by less than 10x.
    On MIMIC-IV, this implies a non-uniform density for $t \in (0.5, 0.6)$ for computing PV.
    \item The second scenario is more constrained. We set $\alpha=0.5,\kappa=0.1|\mathcal{D}|$ to reflect more stringent clinical priorities for high precision and low total alarms. Recent economic estimates~\citep{rogers_optimizing_2023} suggest a missed clinical deterioration may have roughly 20-40 times the cost of a false alarm, so we use $\cRatio \sim \text{Unif}(\frac{1}{40}, \frac{1}{20})$. On MIMIC-IV data, this implies a non-uniform density for $t \in (.18, .31)$.
\end{enumerate}

\textbf{Evaluation Plan.}
We compare 4 possible selection strategies: max pAUROC and max recall in the feasible region ($\alpha,\kappa$-aware; $t$-unaware), max VOROS ($\alpha,\kappa$-unaware; $t$-aware), and max PV (aware of both constraints and costs). For each strategy, we search a grid of over 140 ROC curves each from a different classifier-hyperparameter combination.
For each possible strategy, we select one curve, and then a threshold $\tau$ to use at each $t$, always using the validation set for selection decisions (details in App.~\ref{supp_mimiciv}).
We then evaluate and report the test set performance of each selected threshold. In this way, we examine how binary alert systems behave on new data, avoiding unrealistic post-hoc threshold selection on the test set. 

\textbf{Results.}
For MIMIC-IV, Tab.~\ref{tab:mimiciv_results_main} reports the test cost of each strategy across the two scenarios.
These are the costs \emph{in expectation} over a uniform distribution of cost ratios $C_{\text{FN:FP}}$. 
eICU cost results are in Tab.~\ref{tab:eICU_results_main}.
Further results are in App.~\ref{supp_mimiciv}, including visuals of selected ROC curves, expanded tables, and details of the calculation of expected cost.
% Fig.~\ref{fig:results_mimiciv} gives results on MIMIC-IV. Panel (a) shows two ROC curves selected by different strategies: Curves \emph{crossing} is a sign that careful selection is needed. 
% Panel(b) shows their respective normalized partial areas as a function of $t$. 

\textbf{Analysis.}
In the more relaxed scenario 1, our max partial VOROS (PV) strategy yields better (lower) normalized cost on test data than 3 alternatives on MIMIC-IV. It also ties for best cost on eICU. This highlights how using cost-unaware strategies like maximizing recall can be inadequate, even if they incorporate ``hard cutoffs'' for the constraints of interest.

The more stringent scenario 2 favors the lower left of ROC curves where there is less crossing; we find 3 of the 4 strategies (including PV) sensibly choose the same ROC curve and thus have similar costs. Only \citeauthor{ratigan2024voros}'s original VOROS ($\alpha,\kappa$-unaware) yields a model with worse cost performance in this scenario on both datasets, highlighting the inadequacy of \emph{ignoring constraints} in model selection.

On MIMIC-IV, all selected models satisfy the $\alpha,\kappa$ constraints on test data. However, on eICU several models that satisfy $\alpha,\kappa$ on validation fall slightly short of satisfying the same constraints on test (e.g. precision of 0.44 instead of 0.5), due to statistical variation across subsets of the same size. These are marked with red highlighting in Tab.~\ref{tab:eICU_results_main}; detailed results in Tab.~\ref{tab:eicu_scenario_tables_detailed}.

\definecolor{MyLightRed}{rgb}{0.99,0.45,0.45}

\begin{table}[!t]
\resizebox{0.48\textwidth}{!}{%
\begin{tabular}{l r r}
& scenario 1 & scenario 2
\\
& $\alpha = 0.15, \kappa = 0.5 |\mathcal{D}|$
& $\alpha = 0.5, \kappa = 0.1 |\mathcal{D}|$
\\
& $\frac{\cFP}{\cFN} \sim \text{Unif}(\frac{1}{9}, \frac{1}{6} )$
& $\frac{\cFP}{\cFN} \sim \text{Unif}(\frac{1}{40}, \frac{1}{20} )$
\\
& $t \in [.5,.6]$
& $t \in [.18,.31]$
\\
\hline
Strategy 
 & Avg. Cost & Avg. Cost
\\
\hline
max VOROS
& \textbf{0.261}
& 0.640
\\
max pAUROC
& 0.303
& \textbf{0.538}
\\
max recall 
& 0.302
& \textbf{0.538}
\\
max PV (ours)
& \textbf{0.261}
& \textbf{0.538}
\end{tabular}
}%end resizebox
\caption{
\textbf{MIMIC-IV results for model selection.}
We report average costs on test (lower is better).
All models satisfy the $\alpha,\kappa$ constraints on both validation and test. Further results in Tab.~\ref{tab:mimiciv_scenario_tables_detailed}.
}%endcaption
\label{tab:mimiciv_results_main}
\end{table}
\definecolor{MyLightRed}{rgb}{0.99,0.45,0.45}

\begin{table}[!t]
\resizebox{0.48\textwidth}{!}{%
\begin{tabular}{l r r r r}
& \multicolumn{2}{c}{scenario 1} & {scenario 2}
\\
& \multicolumn{2}{c}{$\alpha = 0.15, \kappa = 0.5 |\mathcal{D}|$}
& \multicolumn{2}{c}{$\alpha = 0.5, \kappa = 0.1 |\mathcal{D}|$}
\\
& \multicolumn{2}{c}{$\frac{\cFP}{\cFN} \sim \text{Unif}(\frac{1}{9}, \frac{1}{6} )$}
& \multicolumn{2}{c}{$\frac{\cFP}{\cFN} \sim \text{Unif}(\frac{1}{40}, \frac{1}{20} )$}
%\\
%& $t \in [.5,.6]$
%& $t \in [.18,.31]$
\\
\hline
Strategy 
 & \makecell[b]{Avg. Cost \\ \small on test}
 & \makecell[b]{\scriptsize $\text{Prec} \geq \alpha$?\\ \scriptsize $\text{Cap} \leq \kappa$? \\ \small  val.~~test} 
 & \makecell[b]{Avg. Cost \\ \small on test}
 & \makecell[b]{\scriptsize $\text{Prec} \geq \alpha$?
 \\ \scriptsize $\text{Cap} \leq \kappa$? \\ 
 \small val.~~test} 
\\
\hline
max VOROS
& \textbf{0.319}~~ & Y~~~Y~~~
& 0.772~~ & Y~~~Y~~~
\\
max pAUROC
& 0.336~~
& \cellcolor{MyLightRed!30} Y~~~N~~~
& 0.707~~
& \cellcolor{MyLightRed!30} Y~~~N~~~ 
\\
max recall 
& 0.336~~
& \cellcolor{MyLightRed!30} Y~~~N~~~
& 0.707~~
& \cellcolor{MyLightRed!30} Y~~~N~~~
\\
max PV (ours)
& \textbf{0.319}~~ & Y~~~Y~~~
& 0.707~~
& \cellcolor{MyLightRed!30} Y~~~N~~~
\end{tabular}
}%end resizebox
\caption{
\textbf{eICU results for model selection.}
We report average costs on test (lower is better).
Strategies and scenarios are the same as for MIMIC-IV in Tab.~\ref{tab:mimiciv_results_main}.
All models satisfy constraints on validation, but some do not satisfy on test; further results in Tab.~\ref{tab:eicu_scenario_tables_detailed}.
}%endcaption
\label{tab:eICU_results_main}
\end{table}

\section{RELATED WORK}
\label{sec:related_work}
For an overview of performance metrics see~\citet{handAssessingPerformanceClassification2012}; for a focus on visuals, see~\citep{pratiSurveyGraphicalMethods2011}. 
%ROC curves remain a popular visual; see~\citet{streinerWhatsROCIntroduction2007} or \citet{fawcettIntroductionROCAnalysis2006} for accessible introductions. 
\citet{steyerbergBetterClinicalPrediction2014} provides advice for healthcare focused modeling, where beyond just assessing discrimination via ROC, calibration is also valuable.

A common summary of the ROC is the area under the curve~\citep{bradleyUseAreaROC1997,handAssessingPerformanceClassification2012}, known as AUROC or the concordance statistic.
%~\citep{hernandez-oralloUnifiedViewPerformance2012}.
Many works recommend a \emph{partial area} under the ROC by only integrating over some false positive rates~\citep{mcclishAnalyzingPortionROC1989,jiangReceiverOperatingCharacteristic1996,robinPROCOpensourcePackage2011}.
Our partial VOROS can be seen as a 3D extension of the partial AUROC that focuses on a desired cost range and obeys both capacity and precision constraints.
Other alternative partial area metrics seek better correspondence to concordance~\citep{carringtonNewConcordantPartial2020} but do not account for capacity or precision.

\citet{shao2024weighted}  extend ROC curves to cost-sensitive learning, seeking to make weighted area-under-curve training robust to train-to-test shifts in cost and class distribution. Their work does not address precision or capacity constraints, unlike our partial VOROS.

Particularly for hospital alert systems, some works recommend \emph{precision-recall} curves instead of (or in addition to) ROC curves~\citep{saitoPrecisionRecallPlotMore2015,romero-brufauWhyCstatisticNot2015,martinUseAreaPrecisionRecall2025}. However, claims that the PR curve or the area under it (AUPRC) is somehow superior to the ROC/AUROC for imbalanced data have been recently refuted~\citep{richardsonReceiverOperatingCharacteristic2024,mcdermottCloserLookAUROC2024}.
For more on PR curves, see \citet{flachPrecisionRecallGainCurvesPR2015}.

%\textbf{Training models to meet operational constraints.}
Synergistically with our work on a new performance metric, other work has sought to train models directly to perform well on certain metrics~\citep{tsoiBridgingGapUnifying2022,ebanScalableLearningNonDecomposable2017}. Some of these directly optimize for recall at a precision constraint~\citep{rath2022optimizing,fathonyAPperfIncorporatingGeneric2020,pengExactReformulationOptimization2025} or the area under the PR curve~\citep{ramziRobustDecomposableAverage2021}.
It may be possible to optimize for multi-objective criteria, such as maximizing recall and precision while minimizing capacity (and possibly other objectives) using a weighted sum of losses. 
This might be disfavored because it presents additional challenges like how to choose the weights on each objective term. Our approach that enforces hard constraints on precision or capacity better reflects the realities of intended applications: as long as the precision is high enough, cost alone rather than "cost $+$ precision" should be the driving objective.

\section{CONCLUSION}
\label{sec:conclusion}
We have developed the \textbf{partial volume over the ROC surface}, aka the partial VOROS or PV metric, as a performance measure for binary classifiers that accounts for cost-imbalance, precision constraints, and capacity constraints.
Our work represents careful geometric reasoning about tradeoffs between recall, precision, and false alarm rates.
Our experiments on real health records show how partial VOROS can help evaluate models and make model selection decisions that would meet necessary operational constraints and reduce costs when deployed.

\textbf{Limitations.}
We focus on binary classification; future work would be needed to transfer these ideas to a many-class or multi-label setting.
Computing this metric requires specific constraints $\alpha,\kappa$ to be specified as well as a distribution over costs, either explicitly via a distribution over fractional cost parameter $t$ or implicitly via a distribution on the ratio $\cRatio$.
It is likely challenging to elicit values for technical constraints from non-technical stakeholders, though some work suggests routes forward~\citep{wuNovelPartialArea2008}.

The PV metric is specialized to a particular class balance. Numerical PV values are not comparable across datasets with different prevalence. Naturally, due to the relationship between precision and prevalence, any performance metric that requires enforcing a precision constraint could not escape a sensitivity to prevalence.

Real applications often face \emph{drift} in class balance over time (e.g. due to emerging diseases, patient population shifts, or changing care practices). 
Stakeholder needs can evolve too, requiring different $\alpha$ or $\kappa$ values or different cost distributions over time.
Care is needed to wield PV effectively in such cases. 

Even without drift, ensuring constraint satisfaction on several datasets from the same source with the same prevalence can be challenging due to statistical variation, as Tab.~\ref{tab:eICU_results_main} indicates. In practice, enforcing a slightly higher $\alpha$ or lower $\kappa$ than strictly needed may better ensure generalization.
Future work could examine \emph{statistical guarantees} for constraint satisfaction. 

%\textbf{Outlook.}
%In clinical applications and beyond, the incorporation of precision and capacity constraints is critical to successful deployment. We hope our proposed metrics here help decision-makers find models that meet their needs.

\textbf{Recommendations.}
To help practitioners compute our PV metric on their data, we provide a User Guide in App.~\ref{supp:user_guide} and an open source release of Python code (see GitHub link on page 1).
We intend that PV can be used in at least two ways: (i) as a standalone diagnostic metric for model assessment by human inspection, and (ii) a metric for model selection. We provide recommendations for each use case in App.~\ref{supp:user_guide}.

\clearpage
\subsection*{Acknowledgements}
The authors acknowledge support from the U.S. National Science Foundation (NSF) via two awards: IIS \#2338962 and the 
DIAMONDS REU \#2149871.
A part of author MH's effort was also supported by the U.S. National Institutes of Health under award number R01HL180937. 
This paper's content is solely the responsibility of the authors and does not necessarily represent the official views of the NSF or NIH.

% MCH note to self:
% Carissa was supported by private donor, but still indirectly by CNS \#2149871.

\bibliographystyle{apalike}
\bibliography{references.bib,refs_from_zotero.bib}

\appendix
\onecolumn

\counterwithin{table}{section}
\setcounter{table}{0}
\counterwithin{figure}{section}
\setcounter{figure}{0}
\counterwithin{thm}{section}
\setcounter{thm}{0}

\section{User Guide for Partial VOROS}
\label{supp:user_guide}
In this section, we give a guide to effectively using the partial VOROS to evaluate classifier performance. The high-level outline is
\begin{itemize}
\item Step 1: Define scenario-specific settings: data settings $|\mathcal{P}|$, $|\mathcal{N}|$, constraints $\alpha,\kappa$, and costs $t$.
\item Step 2: Evaluate the partial VOROS.
\end{itemize}

\subsection{Step 0: Identify goals for model development and evaluation.}

Backing up, before calculating PV, we hope developers have a clear intention of what problem they want to solve.
We intend that PV can be
used in at least two ways: (i) as a standalone diagnostic
metric for model assessment by humans, and (ii) a metric for model selection.

First, as a standalone diagnostic, we hope that a human model developer could look at the relative PV values of two models and gain understanding about which one is delivering better binary classifications under the assumed constraints about $\alpha,\kappa$ and cost distribution $p(t)$. PV has a well-defined numerical range and directionality (perfect models get 1.0, higher is better).
In terms of single number metrics that summarize the possible confusion matrices of a classifier, we think PV should stand alone. We would suggest it be reported instead of alternative metrics like AUROC or AUPRC that assess discriminative quality via transformations of possible confusion matrices, but don't reflect cost or constraints. For example, showing PV and AUROC side-by-side doesn't help answer one question, but answers distinct questions with conflicting assumptions: AUROC does not reflect specific constraints or cost ranges, but PV does.
That said, PV may still be complementary to visualizations of the ROC curve. Showing these side-by-side can still be illuminating, especially when the constraint lines are overlaid. PV would also still be complementary to other metrics that assess classifier performance beyond the confusion matrix, such as metrics for calibration \citep{guoCalibrationModernNeural2017,vaicenaviciusEvaluatingModelCalibration2019}, group fairness  \citep{hardtEqualityOpportunitySupervised2016,pessachReviewFairnessML2022}, or net benefit~\citep{steyerbergBetterClinicalPrediction2014}.

Second, we do recommend that PV can be valuable in model selection. This requires several key settings to be known in advance, such as the deployment setting's constraints $\alpha,\kappa$ as well as a plausible cost range or distribution. We intend that Sec.~\ref{sec:mimiciv_experiments} provides a demonstration of how the PV performance metric can inform model and threshold selection, ultimately leading to lower cost predictions. It is critical to ensure all selection decisions align with the intended deployment or ``test'' scenario. In particular, the validation set must match deployment's expected prevalence and data distribution as much as possible. Even under well-matched circumstances, a selected model that barely meets constraints on validation may not meet them on test due to statistical noise. It may be wise to set $\alpha$ slightly higher and $\kappa$ slightly lower than strictly needed. Future work could examine generalization bounds for these constraints.

\subsection{Step 1: Define Scenario-specific Settings}

The first step is to define the values or distributions of various scalar quantities used to calculate the partial VOROS. These include:

%The first, and most important one is the relative sizes $\mathcal{P}$ and $\mathcal{N}$.
\begin{enumerate}
    \item \textbf{Define $\mathcal{P}$ and $\mathcal{N}$.} Determine the relative sizes of the positive-labeled set $\mathcal{P}$ and the negative-labeled set $\mathcal{N}$ which together comprise the overall dataset $\mathcal{D}$. These can be defined in absolute terms as $|\mathcal{P}|$ and $|\mathcal{N}|$, or in relative terms by knowing the total size $|\mathcal{D}|$ and the prevalence of the positive class $p=\frac{|\mathcal{P}|}{|\mathcal{D}|}$.
    It is important to remember that partial VOROS values are \textbf{only comparable on datasets of the same prevalence}. 
%While the partial VOROS can be calculated over a range of prevalence values $p=\frac{|\mathcal{P}|}{|\mathcal{D}|}$, for an apples-to-apples comparison, we assume a fixed prevalence which represents the likely relative size of the positive class.

    \item \textbf{Define $\alpha$.} The minimum precision constraint value $\alpha$ must be determined in consultation with appropriate stakeholders. This is a rate between 0.0 and 1.0. It represents the minimum allowable fraction of all alarms that must be correct for the system to be useful. 

    \item \textbf{Define $\kappa$.} The maximum capacity constraint value $\kappa$ can be specified as an absolute value taking any non-negative number on the scale of 0.0 to $|\mathcal{D}|$.
    We could also specify a fractional capacity $\tilde{\kappa}$ between 0.0 and 1.0, such that $\kappa = \tilde{\kappa} |\mathcal{D}|$, so the fraction represents capacity relative to total data size.
    In some cases, it may be more natural to specify $\kappa$ in absolute terms, such as when the clear limit is the total number of available staff.
    A hospital's capacity may be more readily interpreted as maximum number of alarms handled per day (or other unit of time), since a single hospital with finite staff might know exactly the number of alarms they can handle.
     However, different use cases may favor the fractional constraint. For example, an entire hospital system might prefer to specify a ratio instead.
     %as opposed to maximum percentage of patient alarms which can be handled per day, though which manner of specifying these is most intuitive may depend on the particulars of the analysis (a single hospital with a finite staff might know exactly the number of alarms they can handle, but an entire hospital system might prefer to specify a ratio instead).

     \item \textbf{Define cost.} The last two settings which must be specified, again in consultation with stakeholders, are the cost of a false positive $\cFP > 0$ and the cost of a false negative $\cFN > 0$.
     For the hospital alert use case, cost values should be specified relative to the usual standard of care. So the cost of a false negative is the cost of not giving additional immediate attention and treatment to a patient in need  (but still visiting them as planned in rounds). Similarly, the cost of a false positive should cover both the cost of treatment when none was needed and the opportunity cost of additional time spent caring for the patient beyond standard of care.
     \medskip
     
     \textbf{Cost as a Ratio.} The absolute values of these costs are not important, instead what matters is the ratio:  $\cRatio = \frac{\cFP}{\cFN}$.
    All absolute cost settings with the same ratio yield the same fractional cost $t$ via Def.~\ref{defn:t}. 
    \medskip
    
    \textbf{Uncertain costs.}
    A key strength of our PV approach is the ability to \emph{average over uncertainty} about the fractional cost $t \in [0, 1]$ or the cost ratio $\cRatio > 0$, rather than force a specific value. Users can specify a valid distribution over $t$ or directly over $\cRatio$. Using the one-to-one mapping between $t$ and $\cFP,\cFN$ values in Def.~\ref{defn:t}, we can write the expected value definition of PV in terms of whichever distribution is convenient, as shown below.
     % What is important in specifying these is their relative size, so these can be given as individual ranges or as a range on either the ratio $C_{FP}:C_{FN}$ or $C_{FN}:C_{FP}$.

\end{enumerate} 
\begin{align}
    PV(\mathcal{F}) &= 
        \mathbb{E}_{p(t)} [
        \max_{\tau} A^*_t( \mathcal{F}_{\tau} )
        ]
        \\ \notag 
    &= \mathbb{E}_{p(\cRatio)}[
        \max_{\tau} A^*_{t(\cRatio)}( \mathcal{F}_{\tau} )
        ], \qquad t(\cRatio) := \frac{\cRatio |\mathcal{N}|}{\cRatio |\mathcal{N}| + |\mathcal{P}|},
\end{align}
In practice, a direct distribution over the fractional cost parameter $t$ is complicated to select because it requires beliefs about $\cRatio$ as well as specific values related to the relative sizes of $\mathcal{P}$ and $\mathcal{N}$ in the dataset. Unless the composition of validation and test sets are exactly matched, the desired $p(t)$ will need to \emph{change} even across validation and test sets from the same data source.

We thus recommend directly specifying a distribution over $\cRatio$ as both more natural to stakeholders and transportable across datasets with different prevalence. See our analysis of Scenario 1 and Scenario 2 in Sec.~\ref{sec:mimiciv_experiments} for two practical versions of a $\cRatio$ distributions in our hospital alert application.

\subsection{Step 2: Evaluate the Partial VOROS}

Once all of the settings from Step 1 are appropriately specified, we can evaluate the partial VOROS metric. 

Exact evaluation of the Partial VOROS is only possible when the chosen distribution over $\cRatio$ (or $t$) has a PDF we can readily evaluate. 
We use simple uniform distributions throughout our experiments. 
Alternatively, if sampling is possible then Monte Carlo estimation of the integral could be done.

To evaluate PV in practice, we first use the constants $\alpha$, $\kappa$, $|\mathcal{P}|$, and $|N|$ to define the feasible region as a subset of ROC space (this is the intersection of six half spaces). 

Then, we obtain a dense grid of $\cRatio$ values and map each element to its corresponding $t$ value via Def.~\ref{defn:t}, yield a grid of $t$ values $\{T_1,...,T_N\}$. For each, $T_i$, pick the optimal allowed point on the ROC curve for $\mathcal{F}$. Note, such an optimal point can be found readily by looking at the slope of the observed ROC curve to the left, $m_{\text{left}}(x,f(x))$ and right $m_{\text{right}}(f,f(x))$ of any point $(x,f(x))$ on the curve in the feasible region, the optimal point will be the one where either $m_{\text{right}}\leq \frac{T_i}{1-T_i} \leq m_{\text{left}}$ or, if the optimal point on the curve lies outside the feasible region, the optimal point will be the rightmost feasible point on the curve. Each such point is associated a threshold $\tau_i$ for our classifier $\mathcal{F}$. When using the partial VOROS for model development and selection, the choice of this $\tau_i$ for the given $T_i$ should be determined on the validation set and then that same $\tau_i$ used on the test set.

Once we have a threshold $\tau_i$ associated to each $T_i$, we can calculate the partial area of lesser classifiers at that point. This is the area of the polygon below the isoperformance line associated to $\tau_i$ in the feasible region. Our Python code performs the required geometry and uses the shoelace formula to evaluate the area.

We now have the required ingredients to evaluate the integral that defines PV. 
Our experiments use a numerical approximation via the trapezoid method with 1000 evenly-space points covering the support of the uniform distribution over $\cRatio$. This yields the partial VOROS for the given $\alpha$, $\kappa$, $|\mathcal{P}|$ and $|\mathcal{N}|$.

Practitioners may consider task-specific ranges of the limit parameters $\alpha$, $\kappa$. The experiments in Fig.~\ref{fig:results_edelson} provide a reasonable guide. Remember that PV values can only be fairly compared for datasets with equal prevalence $p$.

 \section{Cases for Feasible Region Polygons}

 In this section, we provide a detailed analysis of the feasible region as well as explicit formulas for the vertices and areas of the feasible region.
 
 \label{supp_casesForPolygons}
 Using the notation of Definition 14, we have the following formulas for the potential vertices of the feasible region.

\begin{lem}[Coordinates of Vertices]
If $i,j \in \{0,1\}$, then $v_{ij}=(i,j)$, also if $i=0$, then $v_{i\alpha}=v_{\alpha i}=(0,0)$. Otherwise, we have
\begin{itemize}
\item $v_{\alpha \kappa} = \left(\dfrac{(1-\alpha)\kappa}{|\mathcal{N}|}, 
        \dfrac{\alpha \kappa}{|\mathcal{P}|}\right)$

\item $v_{\alpha 1}= \left( \dfrac{(1-\alpha)|\mathcal{P}|}{\alpha |\mathcal{N}|},1 \right)$

\item $v_{\kappa 1}=\left( \dfrac{\kappa - |\mathcal{P}|}{|\mathcal{N}|},1 \right)$

\item $v_{0 \kappa}=\left(0, \dfrac{\kappa}{|\mathcal{P}|} \right)$

\item $v_{\kappa 0}=\left( \dfrac{\kappa}{|\mathcal{N}|},0\right)$

\item $v_{1 \kappa}=\left(1,\dfrac{\kappa-|\mathcal{N}|}{|\mathcal{P}|}\right)$

\item $v_{1 \alpha}=\left(1, \dfrac{\alpha |\mathcal{N}|}{(1-\alpha)|\mathcal{P}|}\right)$
\end{itemize}

\end{lem}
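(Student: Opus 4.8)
The plan is to treat each of the nine potential vertices as the unique solution of the $2\times 2$ linear system formed by the two lines defining it in Definition~\ref{def:vertices}, so that the entire lemma reduces to a sequence of elementary substitutions. The one preliminary step carrying any content is to rewrite the two oblique lines purely in terms of $|\mathcal{P}|$ and $|\mathcal{N}|$. Using $p=\frac{|\mathcal{P}|}{|\mathcal{D}|}$, and hence $\frac{1-p}{p}=\frac{|\mathcal{N}|}{|\mathcal{P}|}$, the minimum precision line from the Precision Bound lemma becomes $\ell_\alpha: y=\frac{\alpha|\mathcal{N}|}{(1-\alpha)|\mathcal{P}|}x$, a line of positive slope through the origin, while the Capacity Bound lemma gives $\ell_\kappa: |\mathcal{P}|y+|\mathcal{N}|x=\kappa$. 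Everything downstream is linear algebra with these two expressions and the four edges $x\in\{0,1\}$, $y\in\{0,1\}$.

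First I would dispatch the trivial entries. Each $v_{ij}$ is the intersection of the vertical line $x=i$ with the horizontal line $y=j$, so $v_{ij}=(i,j)$ with no computation. The collapse $v_{0\alpha}=v_{\alpha 0}=(0,0)$ is immediate from the observation above that $\ell_\alpha$ passes through the origin: its intersection with the $y$-axis ($x=0$) and with the $x$-axis ($y=0$) is the origin in both cases.

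Next I would work through the remaining seven intersections, each a single substitution. For the vertices lying on a coordinate edge I substitute the edge value and solve the resulting one-variable equation: setting $y=1$ in $\ell_\alpha$ yields $v_{\alpha 1}$, setting $y=1$ in $\ell_\kappa$ yields $v_{\kappa 1}$, setting $x=1$ in $\ell_\alpha$ yields $v_{1\alpha}$, setting $x=1$ in $\ell_\kappa$ yields $v_{1\kappa}$, and setting $x=0$ or $y=0$ in $\ell_\kappa$ yields $v_{0\kappa}$ and $v_{\kappa 0}$. The only genuinely two-line intersection is $v_{\alpha\kappa}=\ell_\alpha\cap\ell_\kappa$: substituting $\ell_\alpha$ into $\ell_\kappa$ gives $\frac{\alpha|\mathcal{N}|}{1-\alpha}x+|\mathcal{N}|x=\kappa$, and the identity $\frac{\alpha}{1-\alpha}+1=\frac{1}{1-\alpha}$ collapses this to $x=\frac{(1-\alpha)\kappa}{|\mathcal{N}|}$, whence $y=\frac{\alpha\kappa}{|\mathcal{P}|}$.

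I do not expect a real obstacle: the computation is routine once the slope conversion is in place. The only points requiring care are bookkeeping, namely matching each algebraic solution to its correct subscript, and confirming well-posedness, namely that the denominators $|\mathcal{N}|$, $|\mathcal{P}|$, $\alpha$, and $1-\alpha$ are all nonzero, which the Practical Assumptions (Def.~\ref{def:practical_assumptions}) guarantee.
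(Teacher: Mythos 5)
Your proposal is correct and takes essentially the same approach as the paper, which simply states that the lemma ``follows directly from simple algebra using the definition''; you have carried out exactly that algebra, including the key conversion $\frac{1-p}{p}=\frac{|\mathcal{N}|}{|\mathcal{P}|}$ that puts $\ell_\alpha$ and $\ell_\kappa$ in the form needed for the substitutions. All seven nontrivial intersections check out against the paper's stated coordinates, so your write-up is a valid (and more explicit) version of the paper's proof.
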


This lemma follows directly from simple algebra using the definition.

In this section, we explain how to handle cases that don't fit into Definition~\ref{def:practical_assumptions}.

% Notation below is redundant as it ended up in the main paper.

%\begin{defn}[Notation for vertices]
%Let $i,j \in \{0,1\}$ and let $\beta \in \{\alpha, \kappa\}$. Define the following 9 vertices for all possible $i,j,\beta$

%\begin{enumerate}
%\item $v_{ij}$ is the intersection of $x=i$ and $y=j$
%\item $v_{\beta j}$ is the intersection of $\ell_\beta$ and $y=j$
%\item $v_{i\beta}$ is the intersection of $x=i$ and $\ell_\beta$.
%\item $v_{\alpha \kappa}$ is the intersection of $\ell_\alpha$ and $\ell_\kappa$.
%\end{enumerate}
%\end{defn}

Note: since $\ell_\alpha$ passes through the origin, we have $v_{00}=v_{\alpha 0}=v_{0\alpha}=(0,0)$ for all $\alpha$, so there are really only 7 points of interest for our analysis.

With a precision $\alpha \geq 0$ and maximum capacity $\kappa \geq 0$, there are 13 cases for our feasible region, we break these into $8$ degenerate and $5$ nondegenerate cases, where the 3 cases that satisfy Definition 11 are 3 of the 5 non-degenerate cases, and the other two non-degenerate cases violate Definition 11 because $\alpha < p$.  Let $\ell_{\alpha}$ denote the precision line and $\ell_{\kappa}$ denote the capacity line.

\subsection{Degenerate Cases}

First, note that if $\alpha=1$ or $\kappa=0$, then the feasible region is merely the point $(0,1)$ or $(0,0)$ respectively. Also, the feasible region is empty if $\alpha=1$ and $\kappa \leq |\mathcal{D}|$.

If $\alpha=0$ or $\kappa \geq |\mathcal{D}|$, we consider the feasible region to be degenerate as only one of the two bounds intersects the interior of ROC space.

The degenerate cases depend on how $\ell_{\alpha}$ or $\ell_{\kappa}$ intersect ROC space. If all we have is a minimum precision bound, there are two possibilities. $v_{\alpha 1}$ borders ROC space, or $v_{1 \alpha}$ borders ROC space. Note the first case is equivalent to nondegenerate case 1 below. Similarly, if all we have is a capacity bound, there are four cases depending on which pair of $v_{\kappa 0},v_{\kappa 1},v_{0\kappa}$ and $v_{1\kappa}$ border ROC space (note $v_{0 \kappa}$ and $v_{\kappa 1}$ cannot both border ROC space since $\ell_{\kappa}$ has nonpositive slope).

\subsection{Nondegenerate cases}

Assume that $\alpha \in (0,1)$ and $\kappa \in (0, |\mathcal{D}|)$, then we have five cases.

Case 1: $v_{\alpha \kappa}$ lies inside of ROC space.

This also splits into 2 subcases depending on whether $v_{01}$ is feasible according to $\ell_{\kappa}$.

 Case 1A: If $v_{01}$ is not feasible then the feasible region is the triangle $\triangle v_{00}v_{\alpha \kappa}v_{0\kappa}$. This is Case 1 in the main paper. 

Case 1B: If $v_{01}$ is feasible, then the feasible region is the quadrilateral $v_{00}v_{\alpha \kappa}v_{\kappa 1}v_{01}$. This is Case 2 in the main paper.

Case 2: $v_{\alpha \kappa}$ lies above $y=1$. (This is Case 3 in the main paper) 

In this case, the feasible region is simply the triangle $\triangle v_{00} v_{\alpha1}v_{01}$.

Case 3: $v_{\alpha \kappa}$ lies to the right of $x=1$. (Not in the main paper, violates Definition 11 since $\alpha < p$)

This splits into 2 subcases depending on whether $v_{01}$ is feasible according to $\ell_{\kappa}$.

Case 3A: if $v_{01}$ is feasible, then the feasible region is the pentagon $v_{00},v_{1\alpha}v_{1\kappa}v_{\kappa 1}v_{01}$

Case 3B: if $v_{01}$ is not feasible, then the feasible region is the trapezoid $v_{00}v_{1\alpha}v_{1\kappa}v_{0\kappa}$.

\subsection{Geometry of Polygon Cases Satisfying Definition 11}
\label{supp:geometry_cases_satisfying}

In this section we detail the form and area of the feasible region using the notation of Definition~\ref{def:vertices} subject to the cases in Section 3.4 (and also denoted Cases A, C1 and C2 above): 

\begin{lem}[Vertices for Feasible Regions]
Counterclockwise from the origin, the bounding vertices of the Case 1 Triangle are
    \begin{itemize}
        \item $v_{00} = (0,0)$
        \item $v_{\alpha \kappa} = (\frac{(1-\alpha)\kappa}{|\mathcal{N}|}, 
        \frac{\alpha \kappa}{|\mathcal{P}|})$
        \item $v_{0\kappa } = (0, \frac{\kappa}{|\mathcal{P}|})$
    \end{itemize}

Counterclockwise from the origin, the bounding vertices of the Case 2 Quadrilateral are
\begin{enumerate}[noitemsep]
\item $v_{00}=(0,0)$
\item $v_{\alpha\kappa} =\left(\dfrac{(1-\alpha)\kappa}{|\mathcal{N}|},\dfrac{\alpha \kappa}{|\mathcal{P}|}\right)$
\item $v_{\kappa 1}=\left(\dfrac{\kappa-|\mathcal{P}|}{|\mathcal{N}|},1\right)$
\item $v_{01}=(0,1)$
\end{enumerate}

Counterclockwise from the origin, the bounding vertices of the Case 3 triangle are

\begin{enumerate}[noitemsep]
\item $v_{00}=(0,0)$
\item $v_{\alpha 1}=\left(\dfrac{(1-\alpha)|\mathcal{P}|}{\alpha |\mathcal{N}|},1\right)$
\item $v_{01}=(0,1)$
\end{enumerate}
\label{lem:explicit_coordinates}
\end{lem}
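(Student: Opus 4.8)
The plan is to treat each listed vertex as the solution of a $2\times 2$ linear system — the intersection of two lines drawn from the six lines $x=0$, $x=1$, $y=0$, $y=1$, $\ell_\alpha$, and $\ell_\kappa$ — and then to argue separately which of these intersection points actually bound the feasible region in each of the three cases. The argument therefore splits naturally into a coordinate computation (which is routine) and a case-identification step (which carries the real content).

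For the coordinates I would recall the two defining lines in the $|\mathcal{P}|,|\mathcal{N}|$ parameterization: by the precision-bound lemma, $\ell_\alpha$ is $y = \frac{\alpha|\mathcal{N}|}{(1-\alpha)|\mathcal{P}|}x$ through the origin, and by the capacity-bound lemma, $\ell_\kappa$ is $|\mathcal{P}|y + |\mathcal{N}|x = \kappa$. Each listed vertex then follows by substitution. For instance $v_{0\kappa}$ and $v_{01}$ come from setting $x=0$; $v_{\kappa1}$ and $v_{\alpha1}$ from setting $y=1$ in $\ell_\kappa$ and $\ell_\alpha$ respectively; and $v_{\alpha\kappa}$ from substituting $\ell_\alpha$ into $\ell_\kappa$, where the only simplification needed is $\frac{\alpha}{1-\alpha}+1 = \frac{1}{1-\alpha}$, giving $x = \frac{(1-\alpha)\kappa}{|\mathcal{N}|}$ and hence $y = \frac{\alpha\kappa}{|\mathcal{P}|}$. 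These are exactly the coordinates already asserted by the Coordinates-of-Vertices lemma, which I may assume.

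The substantive step is to confirm which vertices bound the region in each case. Since the feasible region is the intersection of the unit square with the half-plane above $\ell_\alpha$ and the half-plane below $\ell_\kappa$, it is a convex polygon whose vertices are a subset of the nine candidate points, namely those satisfying all constraints. Two thresholds on $\kappa$ control the combinatorics: whether $(0,1)$ is feasible, i.e.\ $\kappa/|\mathcal{P}| \ge 1$, equivalently $\kappa \ge |\mathcal{P}|$ (this decides whether $\ell_\kappa$ meets the left edge at $v_{0\kappa}$ or the top edge at $v_{\kappa1}$); and whether $v_{\alpha\kappa}$ lies inside the square, i.e.\ $\frac{\alpha\kappa}{|\mathcal{P}|}\le 1$, equivalently $\kappa \le \frac{|\mathcal{P}|}{\alpha}$ (this decides whether the capacity constraint binds or is dominated by precision). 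Tracing the boundary counterclockwise then yields the triangle $v_{00}v_{\alpha\kappa}v_{0\kappa}$ when $\kappa<|\mathcal{P}|$ (Case 1); the quadrilateral $v_{00}v_{\alpha\kappa}v_{\kappa1}v_{01}$ when $|\mathcal{P}|\le\kappa<\frac{|\mathcal{P}|}{\alpha}$ (Case 2); and the triangle $v_{00}v_{\alpha1}v_{01}$ when $\frac{|\mathcal{P}|}{\alpha}\le\kappa<|\mathcal{D}|$ (Case 3).

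I expect the main obstacle to be the bookkeeping in this last step: verifying, under the practical assumptions of Def.~\ref{def:practical_assumptions} (in particular $|\mathcal{P}|<|\mathcal{N}|$, $p<\alpha<1$, and $0<\kappa<|\mathcal{D}|$), that each listed point is feasible and that no additional candidate vertex sneaks into the boundary. Concretely I would check that the slope of $\ell_\alpha$ exceeds $1$ exactly when $\alpha>p$ (so $\ell_\alpha$ exits through the top edge rather than the right edge), that $v_{\kappa1}$ has abscissa $\frac{\kappa-|\mathcal{P}|}{|\mathcal{N}|}\in(0,1)$ in Case 2 using $\kappa<|\mathcal{D}|$, and that in Case 3 the capacity line lies entirely outside the precision-feasible part of the square so that it never binds. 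Finally I would confirm the counterclockwise orientation by checking that the signed (shoelace) area of the listed vertex order is positive, which for the Case 1 triangle reduces to $\frac{(1-\alpha)\kappa^2}{|\mathcal{N}||\mathcal{P}|}>0$.
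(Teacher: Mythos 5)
Your proposal is correct and follows essentially the same route as the paper: the coordinates are obtained by intersecting pairs of the lines $x=0$, $y=1$, $\ell_\alpha$, $\ell_\kappa$ (the paper dismisses this as ``simple algebra''), and the case identification via the thresholds $\kappa = |\mathcal{P}|$ (whether $(0,1)$ is feasible) and $\kappa = |\mathcal{P}|/\alpha$ (whether $v_{\alpha\kappa}$ stays inside the unit square) is exactly the case split the paper establishes in its Section~3.4 geometric analysis and appendix case enumeration. Your additional verifications --- that $\alpha > p$ forces $\ell_\alpha$ to exit through the top edge, that $v_{\kappa 1}$ has abscissa in $(0,1)$ under $\kappa < |\mathcal{D}|$, and the positive-shoelace orientation check --- are details the paper leaves implicit, not a different argument.
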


Each of these vertices can be found by simple algebra.

In order to decide which of the 3 cases to use in the area formula given by \ref{lem:formula_for_partial_area}, we can use the following lemma.

\begin{lem}[Area of Feasible Region]
\label{lem:area_feasible_region}
In general, the area of the feasible region in ROC space is

$$A^*=\begin{cases}
\dfrac{(1-\alpha)\kappa^2}{2|\mathcal{N}||\mathcal{P}|} & \text{case 1}\\
\dfrac{2 \kappa |\mathcal{P}|-\alpha\kappa^2-|\mathcal{P}|^2}{2|\mathcal{N}|\mathcal{P}|} & \text{case 2}\\
\dfrac{(1-\alpha)|\mathcal{P}|}{\alpha|\mathcal{N}|} & \text{case 3}
\end{cases}$$

%$$A = \dfrac{|\mathcal{P}|(1-\alpha)\kappa+(\kappa-|\mathcal{P}|)(|\mathcal{P}|-\alpha \kappa)}{2 |\mathcal{N}| |\mathcal{P}|}$$

\end{lem}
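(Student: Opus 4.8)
The plan is to compute each area directly with the shoelace formula introduced above, using the explicit vertex coordinates recorded in Lemma~\ref{lem:explicit_coordinates}. The preceding case analysis has already established which vertices bound the feasible region and that each region is a convex polygon, so no geometric work remains: in each case the area reduces to a finite algebraic computation in $\alpha$, $\kappa$, $|\mathcal{P}|$, and $|\mathcal{N}|$.

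I would dispatch the two triangular cases first. In both, two bounding vertices lie on the $y$-axis — namely $v_{00}$ and $v_{0\kappa}$ in case 1, and $v_{00}$ and $v_{01}$ in case 3 — so each triangle has a vertical base along $x=0$ and its area is one half the base length times the horizontal distance to the remaining apex. This gives $\tfrac12\cdot\tfrac{\kappa}{|\mathcal{P}|}\cdot\tfrac{(1-\alpha)\kappa}{|\mathcal{N}|}$ for case 1, with apex $v_{\alpha\kappa}$, and $\tfrac12\cdot 1\cdot\tfrac{(1-\alpha)|\mathcal{P}|}{\alpha|\mathcal{N}|}$ for case 3, with apex $v_{\alpha 1}$. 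Each is a single rational function of the four parameters and needs only a one-line simplification.

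For the case-2 quadrilateral I would run the full shoelace sum over $v_{00}, v_{\alpha\kappa}, v_{\kappa 1}, v_{01}$ taken in counterclockwise order. Since $v_{00}$ and $v_{01}$ both have $x=0$, two of the four cross-terms vanish at once, and the surviving terms combine over the common denominator $|\mathcal{N}||\mathcal{P}|$. Expanding the numerator, the two mixed $\alpha\kappa|\mathcal{P}|$ contributions cancel, leaving $2\kappa|\mathcal{P}| - \alpha\kappa^2 - |\mathcal{P}|^2$ and hence the case-2 entry after the overall factor of one half.

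The one step demanding genuine care is this case-2 expansion: the vertices must be kept in a consistent counterclockwise order so the signed area is positive, and the cancellation of the mixed terms must be tracked exactly. As a final guard against both algebraic slips and mislabeled vertices, I would check that $A^*$ is continuous across the two case thresholds $\kappa=|\mathcal{P}|$ and $\kappa=\tfrac1\alpha|\mathcal{P}|$, where the feasible polygon deforms continuously; the boundary value of each case should agree with that of its neighbor.
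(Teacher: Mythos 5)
Your method is exactly the paper's: the paper's entire proof is one sentence applying the shoelace formula to the boundary vertices of Lemma~\ref{lem:explicit_coordinates}, and your base-times-height shortcuts for the triangles plus the explicit shoelace sum for the quadrilateral are a correct and more detailed execution of that. Your case-1 and case-2 computations are right and reproduce the stated entries, including the cancellation of the mixed $\alpha\kappa|\mathcal{P}|$ terms in the quadrilateral.

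There is, however, a concrete problem you gloss over in case 3. Your computation gives
$$\tfrac12\cdot 1\cdot\frac{(1-\alpha)|\mathcal{P}|}{\alpha|\mathcal{N}|}=\frac{(1-\alpha)|\mathcal{P}|}{2\alpha|\mathcal{N}|},$$
which is \emph{half} of the lemma's stated case-3 entry $\frac{(1-\alpha)|\mathcal{P}|}{\alpha|\mathcal{N}|}$; no ``one-line simplification'' reconciles the two. Your geometry is correct --- the case-3 region is the triangle $v_{00}v_{\alpha 1}v_{01}$ with vertical base of length $1$ and apex at $x=\frac{(1-\alpha)|\mathcal{P}|}{\alpha|\mathcal{N}|}$ --- so what your argument actually shows is that the printed lemma has a missing factor of $2$ in the case-3 denominator. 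The continuity check you propose at the end, had you carried it out, would have caught this and settled which side is in error: substituting $\kappa=|\mathcal{P}|/\alpha$ into the case-2 formula yields
$$\frac{2\kappa|\mathcal{P}|-\alpha\kappa^2-|\mathcal{P}|^2}{2|\mathcal{N}||\mathcal{P}|}\bigg|_{\kappa=|\mathcal{P}|/\alpha}=\frac{(1-\alpha)|\mathcal{P}|}{2\alpha|\mathcal{N}|},$$
which agrees with your triangle computation and not with the printed entry. So: same approach as the paper, correct execution, but as written your proposal asserts agreement with a formula that your own calculation refutes; you should instead state explicitly that case 3 of the statement should read $\frac{(1-\alpha)|\mathcal{P}|}{2\alpha|\mathcal{N}|}$.
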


\begin{proof}
This follows by applying the well-known ``shoelace'' formula~\citep{leeShoelaceFormulaConnecting2017,zwillingerCRCPolygonFormula2018} for the area of a polygon applied to the boundary vertices defined above.
\end{proof}

This lemma shows that the area of the feasible region can be calculated precisely from the constraints defining the problem and dataset. Alternatively, since the feasible region consists of the convex hull of 6 lines in the plane, software can calculate the area of the convex hull of the collection of 6 lines $x=0$, $x=1$, $y=0$, $y=1$, $\ell_\kappa$, and $\ell_\alpha$, quite quickly (see Section~\ref{supp_compComplexity} below).

 \section{Cases for Partial Areas}
 \label{supp_isoperformanceCases}
 Given that we are in one of the three cases from Sec.~\ref{sec:geom_feasible_region}, there are four cases for the region of lesser classifiers.

Assuming the practical assumptions in Definition~\ref{def:practical_assumptions} we have the following Lemma.

\parbox{0.98\linewidth}{
\begin{lem}
\label{lem:never_alarm}
The isoperformance line through the baseline of never alarming does not intersect the interior of the feasible region iff $t \leq \dfrac{\alpha | \mathcal{N}|}{\alpha|\mathcal{N}|+(1-\alpha)|\mathcal{P}|}$. 
\end{lem}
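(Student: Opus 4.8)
The plan is to reduce the claim to a comparison of slopes between two rays emanating from the origin. First I would observe that the never-alarm baseline predicts no positives, so $\nTP = \nFP = 0$ and it sits at the vertex $v_{00}=(0,0)$ of ROC space. Its isoperformance line $\ell_t$ is therefore $y = \frac{t}{1-t}x$, a ray from the origin of slope $\frac{t}{1-t}$. On the other side, I would rewrite the precision line from the earlier Precision-Bound lemma using $\frac{1-p}{p} = \frac{|\mathcal{N}|}{|\mathcal{P}|}$, so that $\ell_\alpha$ is the ray $y = \frac{\alpha|\mathcal{N}|}{(1-\alpha)|\mathcal{P}|}x$, also through the origin, of slope $\frac{\alpha|\mathcal{N}|}{(1-\alpha)|\mathcal{P}|}$.

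The key geometric step is that both lines emanate from the vertex $v_{00}$ of the feasible polygon, so whether $\ell_t$ enters the interior is decided purely by its slope there. I would note that in each of the three feasible-region cases (the triangle, the quadrilateral, and the case-3 triangle), the two edges of the region incident to $v_{00}$ are a segment of $\ell_\alpha$ and a segment of the $y$-axis $x=0$; equivalently, near the origin the feasible region is the convex cone lying above $\ell_\alpha$ and to the right of $x=0$. Since the feasible region is convex (an intersection of half-planes), a ray from a boundary vertex meets the interior iff its direction lies strictly inside the cone spanned by the two incident edges; otherwise it is a supporting line touching the region only along the boundary, and by convexity it cannot re-enter the interior farther out. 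Hence $\ell_t$ avoids the interior exactly when its slope does not exceed that of the lower edge $\ell_\alpha$, i.e. $\frac{t}{1-t} \le \frac{\alpha|\mathcal{N}|}{(1-\alpha)|\mathcal{P}|}$, with equality meaning $\ell_t$ coincides with the edge $\ell_\alpha$ and so still misses the interior.

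Finally I would convert the slope inequality into the claimed bound on $t$ by clearing denominators, which is legitimate since all quantities are positive and $1-t>0$. Multiplying out gives $t(1-\alpha)|\mathcal{P}| \le (1-t)\alpha|\mathcal{N}|$, and collecting the $t$ terms yields $t\big[(1-\alpha)|\mathcal{P}| + \alpha|\mathcal{N}|\big] \le \alpha|\mathcal{N}|$, i.e. $t \le \frac{\alpha|\mathcal{N}|}{\alpha|\mathcal{N}| + (1-\alpha)|\mathcal{P}|}$, as desired. The main thing to get right is the geometric equivalence in the middle paragraph: confirming that the $y$-axis and $\ell_\alpha$ really are the two edges at the origin in all three cases, and invoking convexity to rule out the isoperformance ray grazing the boundary at $v_{00}$ yet slicing the interior elsewhere. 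The slope-to-$t$ algebra at the end is routine.
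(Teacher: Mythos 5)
Your proposal is correct and takes essentially the same route as the paper: the paper likewise places the never-alarm baseline at $(0,0)$, writes its iso-performance line as $y=\tfrac{t}{1-t}x$, and reduces the claim to whether this ray lies (weakly) below $\ell_\alpha$ in the first quadrant, which is exactly the slope comparison you convert into the stated bound on $t$ (including the equality case where the lines coincide). Your middle paragraph, identifying the two edges incident to $v_{00}$ in all three cases and invoking convexity, simply makes explicit a geometric step the paper's terser proof leaves implicit.
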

}
\begin{proof}
The isoperformance line through $(0,0)$ is $y=\dfrac{t}{1-t}x$ which lies below $\ell_{\alpha}$ in the first quadrant precisely when $t < \dfrac{\alpha |\mathcal{N}|}{\alpha|\mathcal{N}|+(1-\alpha)|\mathcal{P}|}$. If equality holds, the lines coincide, but $\ell_t$ still misses the interior.
\end{proof}

This lemma means that the baseline of never alarming has the highest cost of any point in our feasible region, it also ensures that the partial VOROS of this baseline is $0$.

Using the notation of definition~\ref{def:vertices} we have

\begin{lem}[Coordinates for Iso-performance Vertices.]

We have the following

\begin{itemize}
\item $v_{0t}=(0,k-\frac{t}{1-t} h)$.
\item $v_{t1}=(\frac{(1-t)(1-k)}{t}+h,1)$
\item $v_{\alpha t}$ has $x$-coordinate given by $A'+\dfrac{B'}{C't-D'}$ where 
\begin{align*}
A'&=\dfrac{(1-\alpha)|\mathcal{P}|(h+k)}{\alpha |\mathcal{N}|+(1-\alpha)|\mathcal{P}|}\\
B'&=(1-\alpha)|\mathcal{P}|\left(\dfrac{\alpha |\mathcal{N}|h-(1-\alpha)|\mathcal{P}|k}{\alpha |\mathcal{N}|+(1-\alpha)|\mathcal{P}|} \right)\\
C'&=\alpha |\mathcal{N}|+(1-\alpha)|\mathcal{P}|\\
D'&=\alpha |\mathcal{N}|
\end{align*}
\item $v_{\kappa t}$ has $x$-coordinate $A+\dfrac{B}{Ct+D}$ where 
\begin{align*}
A&=\dfrac{|\mathcal{P}|(h+k)-\kappa}{|\mathcal{P}|-|\mathcal{N}|}\\
B&=\kappa-k|\mathcal{P}|-\dfrac{|\mathcal{N}|(|\mathcal{P}|(h+k)-\kappa)}{|\mathcal{P}-|\mathcal{N}|}\\
C&=|\mathcal{P}|-|\mathcal{N}|\\
D&=|\mathcal{N}|
\end{align*}
\end{itemize}
\end{lem}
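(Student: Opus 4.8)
The plan is to obtain each vertex by intersecting the iso-performance line $\ell_t$ through $(h,k)$, namely $y = \frac{t}{1-t}(x-h)+k$, with the relevant boundary and then reading off coordinates. Since every vertex here is the intersection of two lines, existence and uniqueness are automatic wherever the two slopes differ, so the whole argument reduces to solving a linear equation and rearranging into the stated form. The first two vertices are direct substitutions. For $v_{0t}$ I set $x=0$ in $\ell_t$ to get $y = k - \frac{t}{1-t}h$ at once. For $v_{t1}$ I set $y=1$ and solve $1-k = \frac{t}{1-t}(x-h)$ for $x$, yielding $x = \frac{(1-t)(1-k)}{t} + h$. Both match the claim with no further work.

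For $v_{\alpha t}$ I first rewrite the minimum precision line from the Precision Bound lemma as $\ell_\alpha: y = \frac{\alpha|\mathcal{N}|}{(1-\alpha)|\mathcal{P}|}x$, using $\frac{1-p}{p} = \frac{|\mathcal{N}|}{|\mathcal{P}|}$. Equating $\ell_\alpha$ and $\ell_t$ gives a single linear equation in $x$ whose solution is $x = \frac{k - m_t h}{m_\alpha - m_t}$, where $m_t = \frac{t}{1-t}$ and $m_\alpha$ is the slope of $\ell_\alpha$. The key maneuver is to clear the factor $\frac{t}{1-t}$ by multiplying numerator and denominator by $(1-t)$; this makes both \emph{affine} in $t$, and the denominator simplifies to $-(C't - D')/\big((1-\alpha)|\mathcal{P}|\big)$ with $C' = \alpha|\mathcal{N}| + (1-\alpha)|\mathcal{P}|$ and $D' = \alpha|\mathcal{N}|$. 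Writing the result as $x = N(t)/(C't - D')$ with $N$ affine, I then set $N(t) = A'(C't - D') - B'$ and match the coefficient of $t$ and the constant term: the former fixes $A' = \frac{(1-\alpha)|\mathcal{P}|(h+k)}{C'}$ and the latter then fixes $B'$, producing exactly $x = A' - \frac{B'}{C't - D'}$.

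The vertex $v_{\kappa t}$ follows the same template using the maximum capacity line $\ell_\kappa: y = \frac{\kappa - |\mathcal{N}|x}{|\mathcal{P}|}$ from the Capacity Bound lemma. Intersecting with $\ell_t$, clearing $\frac{t}{1-t}$, and simplifying leaves a denominator proportional to $Ct + D$ with $C = |\mathcal{P}| - |\mathcal{N}|$ and $D = |\mathcal{N}|$; writing the affine numerator as $A(Ct+D) + B$ and matching coefficients delivers the form $A + \frac{B}{Ct+D}$. I expect the only real difficulty to be bookkeeping rather than anything conceptual: tracking signs correctly when $C = |\mathcal{P}| - |\mathcal{N}| < 0$, and noting that the $\alpha$-case decomposes with a $-B'$ while the $\kappa$-case decomposes with a $+B$, so the coefficient-matching must reproduce the precise claimed expressions for $A,B,A',B'$ rather than some equivalent rearrangement. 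No genuinely new idea is needed beyond this careful algebra.
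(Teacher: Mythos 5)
Your proposal is correct and takes essentially the same route as the paper's own (very terse) proof, which simply states that the vertices follow from intersecting $\ell_t$ with $x=0$, $y=1$, $\ell_\alpha$, and $\ell_\kappa$, and that the rational-linear form arises because the slope $\frac{t}{1-t}$ of $\ell_t$ is rational in $t$. Your explicit coefficient-matching (clearing the factor $\frac{t}{1-t}$ and decomposing the resulting ratio of affine functions of $t$) just carries out the algebra the paper leaves implicit, and it reproduces the stated $A', B', C', D'$ and $A, B, C, D$ exactly.
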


\parbox{0.98\linewidth}{
\begin{proof}
The proof follows from taking the intersection of the lines $\ell_t$, $\ell_\alpha$, $\ell_{\kappa}$, $y=1$ and $x=0$ as needed. The rational linear $t$-terms in the $x$-coordinates of $v_{\alpha t}$ and $v_{\kappa t}$ follow from the fact that the coefficient of $x$ in $\ell_t$ is rational in $t$.
\end{proof}
}

\begin{lem}
The region of lesser classifiers for a feasible $(h,k)$ is always one of: 

\begin{itemize}
\item The triangle $v_{00}v_{\alpha t}v_{0t}$.
\item The quadrilateral $v_{00}v_{\alpha \kappa}v_{\kappa t}v_{0t}$
\item The pentagon $v_{00}v_{\alpha \kappa}v_{\kappa 1}v_{t1}v_{0t}$.
\item The quadrilateral $v_{00}v_{\alpha 1}v_{t1}v_{0t}$.
\end{itemize}
Where, the triangle $v_{00}v_{\alpha t}v_{0t}$ can apply to any of the three cases in section 3., so long as $\ell_t$ intersects $\ell_\alpha$ below $\ell_{\kappa}$ and below $y=1$. The quadrilateral $v_{00}v_{\alpha \kappa}v_{\kappa t}v_{0t}$, can only apply to cases 1 and 2 from Sec.~\ref{sec:geom_feasible_region} since $\ell_t$ needs to intersect $\ell_\kappa$ on the border of the feasible region. The pentagon only applies when $\ell_t$ intersects $y=1$ on the border of the feasible region in case $2$ from Sec.~\ref{sec:geom_feasible_region}. Finally, the quadrilateral $v_{00}v_{\alpha 1}v_{t1}v_{0t}$ only applies when $\ell_t$ intersects $y=1$ on the border of the feasible region in case $3$.
\end{lem}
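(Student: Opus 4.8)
The plan is to realize the region of lesser classifiers as a concrete intersection of convex sets and then read off its vertices by a short case analysis. By Definition~\ref{lesser}, a feasible point $(x,y)$ is a lesser classifier of $(h,k)$ exactly when $\text{Cost}_t(x,y) > \text{Cost}_t(h,k)$, i.e.\ when $(x,y)$ lies strictly below the iso-performance line $\ell_t$ through $(h,k)$: since $\text{Cost}_t(x,y)=tx+(1-t)(1-y)$ strictly increases as $y$ decreases, moving below $\ell_t$ strictly raises the cost. Hence the region of lesser classifiers is precisely the feasible region intersected with the closed half-plane below $\ell_t$. As the feasible region is one of the convex polygons of Section~3.4 and the half-plane is convex, their intersection is again a convex polygon, and it remains only to locate its vertices.

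First I would pin down two vertices that are always present. The origin $v_{00}=(0,0)$ lies in the region because, by Lemma~\ref{lem:never_alarm} together with the last item of Definition~\ref{def:practical_assumptions}, never-alarming has the strictly largest cost in the feasible region, so $(0,0)$ is below $\ell_t$. Moreover $\ell_t$ meets the left edge $x=0$ at $v_{0t}=(0,\,k-\tfrac{t}{1-t}h)$, and feasibility of $(h,k)$ forces this intercept to lie strictly between the origin and the top of the left edge; thus the segment of $x=0$ from $v_{00}$ to $v_{0t}$ is one side of the polygon in every case, so $v_{00}$ and $v_{0t}$ are always vertices.

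The heart of the argument is a slope comparison. The $t$-bound of Definition~\ref{def:practical_assumptions} rearranges to $\tfrac{t}{1-t}<\tfrac{\alpha|\mathcal{N}|}{(1-\alpha)|\mathcal{P}|}$, i.e.\ $\ell_t$ has strictly smaller slope than $\ell_\alpha$ (this is exactly the inequality behind Lemma~\ref{lem:never_alarm}). Since $\ell_t$ starts above $\ell_\alpha$ on the $y$-axis but rises more slowly, the two lines cross once, at $v_{\alpha t}$, and the locus of points below $\ell_t$ and above $\ell_\alpha$ is the bounded triangle $v_{00}v_{\alpha t}v_{0t}$. The four stated cases then correspond to how this triangle is clipped by the remaining constraints $\ell_\kappa$ and $y=1$, which I would split on according to the position of $v_{\alpha t}$. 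If $v_{\alpha t}$ already lies inside the feasible region (below both $\ell_\kappa$ and $y=1$), no further clipping occurs and we obtain the triangle $v_{00}v_{\alpha t}v_{0t}$, valid in any of the three Section~3.4 shapes. Otherwise $\ell_t$ must exit the feasible region through $\ell_\kappa$ or through $y=1$; tracing the far boundary from $v_{0t}$ back to $v_{00}$ (along $\ell_t$, then along the feasible boundary) picks up exactly the extra corner(s) of the feasible region lying below $\ell_t$: the single vertex $v_{\alpha\kappa}$ for an $\ell_\kappa$-exit in shapes~1--2 (quadrilateral $v_{00}v_{\alpha\kappa}v_{\kappa t}v_{0t}$); the pair $v_{\alpha\kappa},v_{\kappa 1}$ for a $y=1$-exit in shape~2 (pentagon $v_{00}v_{\alpha\kappa}v_{\kappa 1}v_{t1}v_{0t}$); or the single vertex $v_{\alpha 1}$ for a $y=1$-exit in shape~3 (quadrilateral $v_{00}v_{\alpha 1}v_{t1}v_{0t}$).

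The remaining work is to confirm the side conditions and exhaustiveness: that in shape~1 the top edge is $\ell_\kappa$, strictly below $y=1$, so the $y=1$-exit configurations cannot occur there, matching the stated restrictions; and that these four configurations cover every admissible position of $v_{\alpha t}$ and every exit edge of $\ell_t$. The main obstacle I anticipate is bookkeeping rather than conceptual: verifying, in each feasible-region shape, which edge $\ell_t$ exits through and that the traced vertex sequence is precisely the claimed one, including checking that $v_{0t}$ and the exit point land on genuine edges and not beyond a corner. Each such check reduces to comparing the intercept $k-\tfrac{t}{1-t}h$ and the relevant crossing abscissae against the vertex coordinates of Lemma~\ref{lem:explicit_coordinates}, which is elementary once the slope inequality $\tfrac{t}{1-t}<\tfrac{\alpha|\mathcal{N}|}{(1-\alpha)|\mathcal{P}|}$ is in hand.
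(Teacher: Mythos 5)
Your proposal is correct and takes essentially the same route as the paper's proof: both first establish that $\ell_t$ always exits the feasible region on the left through the $y$-axis (via the $t$-bound in Definition~11 and the never-alarm lemma), so $v_{00}$ and $v_{0t}$ are vertices in every case, and then enumerate the same right-exit possibilities (through $\ell_\alpha$, through $\ell_\kappa$, or through $y=1$) within each of the three feasible-region shapes. Your half-plane-intersection and triangle-clipping framing is just a slightly more explicit packaging of the paper's case enumeration, and the bookkeeping you defer is likewise left implicit in the paper.
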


\begin{proof}
Since we are in one of the three cases from Sec.~\ref{sec:geom_feasible_region}, note that the isoperformance line will always leave the feasible region on the left through the $y$-axis by the last part of Definition 11, so $v_{0t}$ is a vertex of all the regions.

The remaining possible regions depend on how the line leaves to the right and which points are included in the polygon. Specifically, 

\begin{enumerate}
\item if we are in Case 1, the region of lesser classifiers is the triangle $v_{00}v_{\alpha t}v_{0t}$ if $\ell_t$ lies on or below $v_{\alpha \kappa}$, and is the quadrilateral $v_{00}v_{\alpha \kappa}v_{\kappa t} v_{0t}$ otherwise.
\item in Case 2, the region of lesser classifiers is the triangle $v_{00}v_{\alpha t} v_{0t}$ if $\ell_t$ lies on or below $v_{\alpha \kappa}$, the quadrilateral $v_{00} v_{\alpha \kappa}v_{\kappa t}v_{0t}$ if $\ell_t$ lies between $v_{\alpha \kappa}$ and $v_{\kappa 1}$, or the pentagon $v_{00}v_{\alpha \kappa}v_{\kappa 1}v_{t1}v_{0t}$ if $\ell_t$ lies above $v_{\kappa 1}$.
\item in Case 3, the region of lesser classifiers is the triangle $v_{00}v_{\alpha t}v_{0t}$ if $\ell_t$ lies on or below $v_{\alpha 1}$, or the quadrilateral $v_{00}v_{\alpha 1} v_{t1}v_{0t}$ otherwise.
\end{enumerate}
\end{proof}

It is worth noting that whether $\ell_t$ lies above or below any of these vertices can be readily checked by comparing the cost of $(h,k)$ to that of the boundary point (higher cost points lie below lower costs ones), so this combined with Lemma~\ref{lem:explicit_coordinates} yields a simple algorithm for determining the form of the partial area.

\begin{lem}[Calculating Partial Area]
Let $t \in [0,1]$ be fixed, and let $(h,k)$ be the ROC coordinates of an allowed classifier, then the (non-normalized) partial area of lesser classifiers is given by the formula in Figure 3. 

%This formula has 4 cases. 

%In the first case, the iso-performance segment runs from the $y$-axis to the minimum precision line $\ell_{\alpha}$.  In the second case, the iso-performance segment runs from the $y$-axis to the maximum capacity line $\ell_{\kappa}$; and in the third case, the iso-performance segment $\ell_t$ runs from the $y$-axis to the line $y=1$ and we are in case 2 from section 3.4. The fourth case occurs when $\ell_t$ runs from the $y$-axis to the line $y=1$ and we are in case 3 from section 3.4.

The formula for the first case relies on the $x$ coordinate of $v_{\alpha t}$, $x = A'+\dfrac{B'}{C't-D'}$, where we define
\begin{align*}
A'&= \dfrac{(1-\alpha)|\mathcal{P}|(h+k)}{\alpha|\mathcal{N}|+(1-\alpha)|\mathcal{P}|}\\
B' &= (1-\alpha)|\mathcal{P}| \left( \dfrac{(\alpha|\mathcal{N}|h-(1-\alpha)|\mathcal{P}|k}{\alpha |\mathcal{N}|+(1-\alpha)|\mathcal{P}|}\right)\\
C' &= \alpha |\mathcal{N}| +(1-\alpha) |\mathcal{P}|\\
D' &= \alpha |\mathcal{N}|.
\end{align*}

The formula of the second case relies on the $x$ coordinate of $v_{\kappa t}$, $x = A+\dfrac{B}{Ct+D}$, where
\begin{align*}
A&=\dfrac{|\mathcal{P}|(h+k)-\kappa}{|\mathcal{P}|-|\mathcal{N}|}\\
B&=\kappa - k|\mathcal{P}|-\dfrac{(|\mathcal{P}|(h+k)-\kappa)|\mathcal{N}|}{|\mathcal{P}|-|\mathcal{N}|}\\
C&=|\mathcal{P}|-|\mathcal{N}|\\
D&=|\mathcal{N}|
\end{align*}

\label{lem:formula_for_partial_area}
\end{lem}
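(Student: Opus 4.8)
The plan is to reduce the area computation to a direct application of the shoelace formula, using the polygon identifications already established in the preceding lemma.

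First I would make the region of lesser classifiers explicit. By Definition~\ref{lesser}, a feasible point $(x,y)$ is a lesser classifier of $(h,k)$ exactly when $\text{Cost}_t(x,y) > \text{Cost}_t(h,k)$, i.e. $tx + (1-t)(1-y) > th + (1-t)(1-k)$. Since the gradient of $\text{Cost}_t$ in $(x,y)$ is $(t,\,-(1-t))$ with $t>0$ and $1-t>0$, this half-plane is precisely the set of points lying strictly below the iso-performance line $\ell_t$ through $(h,k)$. Hence the (non-normalized) partial area of lesser classifiers is the area of the intersection of the feasible region with this half-plane, a polygon bounded above by $\ell_t$ and otherwise by edges of the feasible region.

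Next I would invoke the preceding lemma to conclude that this intersection is exactly one of the four polygons $v_{00}v_{\alpha t}v_{0t}$, $v_{00}v_{\alpha\kappa}v_{\kappa t}v_{0t}$, $v_{00}v_{\alpha\kappa}v_{\kappa 1}v_{t1}v_{0t}$, or $v_{00}v_{\alpha 1}v_{t1}v_{0t}$, according to where $\ell_t$ exits the feasible region; which case holds is decided by comparing the cost of $(h,k)$ to the costs of the fixed feasible-region vertices, as noted just before the lemma. Crucially, each of these polygons has $v_{00}=(0,0)$ as a vertex, and every other vertex has coordinates already computed (in the feasible-region vertex lemma and in the preceding coordinate lemma for $v_{0t}, v_{t1}, v_{\alpha t}, v_{\kappa t}$). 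I would then apply the shoelace formula to each polygon. Because one vertex sits at the origin, the shoelace sum collapses to a triangle fan: listing the remaining vertices as $(x_2,y_2),\dots,(x_n,y_n)$, the area is $\tfrac{1}{2}\bigl|\sum_{i=2}^{n-1}(x_i y_{i+1} - x_{i+1}y_i)\bigr|$, since the two terms touching the origin vanish. Substituting the known coordinates then yields each formula in Figure~3. Here the $y$-coordinate of $v_{0t}$ carries the factor $\tfrac{t}{1-t}$, while the $x$-coordinates of $v_{\alpha t}$ and $v_{\kappa t}$ carry the quoted Möbius-in-$t$ forms $A'-\tfrac{B'}{C't-D'}$ and $A+\tfrac{B}{Ct+D}$, obtained by intersecting $\ell_t$ with $\ell_\alpha$ and with $\ell_\kappa$ and solving for $x$; the Möbius shape is a consequence of $\ell_t$ having slope $\tfrac{t}{1-t}$, itself a Möbius function of $t$.

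The main obstacle is the algebraic simplification rather than any geometric subtlety. Once the coordinates are substituted, the shoelace sums mix the factor $\tfrac{t}{1-t}$ coming from $v_{0t}$ and $v_{t1}$ with the two Möbius forms coming from $v_{\alpha t}$ and $v_{\kappa t}$, and one must verify that the cross terms collapse to the clean rational-linear expressions claimed. I expect the work to be done by clearing the common denominators $(1-t)$, $(C't-D')$, and $(Ct+D)$, then collecting like powers of $t$ in the resulting numerator and denominator; the bookkeeping is routine but must be carried out separately for the triangle, the two quadrilaterals, and the pentagon.
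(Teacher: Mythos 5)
Your proposal is correct and takes essentially the same approach as the paper: identify the region of lesser classifiers as one of the four polygons from the preceding lemma, then apply the shoelace (determinant) formula to the known vertex coordinates, including the M\"obius-in-$t$ forms for $v_{\alpha t}$ and $v_{\kappa t}$. The paper's own proof is simply a terser statement of this, listing which polygon corresponds to each of the four formulas in Figure~3.
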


\begin{proof}
The proof follows directly from applying the well-known determinant formula for a polygon. Each formula corresponds to a distinct polygon from the preceding lemma. Specifically:
\begin{itemize}[noitemsep,leftmargin=*]

\item In the first formula, we are calculating the area of triangle $v_{00}v_{\alpha t}v_{0t}$.
\item In the second formula, we are calculating the area of the quadrilateral $v_{00}v_{\alpha \kappa}v_{\kappa t}v_{0t}$.
\item In the third formula, we are calculating the area of the pentagon, $v_{00}v_{\alpha \kappa}v_{\kappa 1}v_{t1}v_{0t}$.
\item In the fourth formula we are calculating the area of the quadrilateral $v_{00} v_{\alpha 1}v_{t1}v_{0t}$
\end{itemize}

\end{proof}

 \section{Computational Complexity}
 \label{supp_compComplexity}
 
Given a classifier $\mathcal{F}$, we can produce an ROC curve by taking all possible binarized classifiers $\mathcal{F}_{\tau}$ and plotting them in ROC space. To distinguish which $\mathcal{F}_{\tau}$ are potentially useful, we can take the convex hull of the curve together with the point $(1,0)$ in $O(n\text{log}(h))$ time, where $h$ is the number of vertices in the Convex Hull using Chan's Algorithm \citep{chan1996optimal}.

Then, given the convex hull, we can associate to each feasible point a range of values of $t$ for which that point will have the lowest cost on the curve and hence the highest area of lesser classifiers.
%\newpage
\begin{algorithmic}
\State Let $\{(x_i,y_i)\}$ the points of the convex hull oriented clockwise from $(0,0)$ to $(1,1)$
\If{$(x_{i+1},y_{i+1})$ is feasible}
    \If{$x_i=0$} 
        \If{$x_{i+1}=0$} 
        \State Skip.
        \Else 
        \State $(x_i,y_i)$ is optimal for $t \in [\frac{y_{i+1}-y_i}{x_{i+1}-x_i+y_{i+1}-y_i},1]$
        \EndIf
    \Else 
        \If{$y_i = 1$}
        \State $(x_i,y_i)$ is optimal for $t$ in $\left[0,\frac{y_i-y_{i-1}}{x_i-x_{i+1}+y_i-y_{i+1}}\right]$; Break
        \Else
        \State $(x_i,y_i)$ is optimal for $t$ in $\left[\frac{y_{i+1}-y_i}{x_{i-1}-x_i+y_{i-1}-y_i},\frac{y_i-y_{i-1}}{x_i-x_{i-1}+y_i-y_{i-1}}\right]$
        \EndIf
    \EndIf
\Else 
    \If{$(x_i,y_i)$ is feasible.}
    \State $(x_i,y_i)$ is optimal for $t \in \left[0,\frac{y_i-y_{i-1}}{x_i-x_{i-1}+y_i-y_{i-1}}\right]$
    \State Break
    \EndIf
\EndIf 

The only computations in this algorithm are the values of $\frac{y_{i+1}-y_i}{x_{i+1}-x_i+y_{i+1}-y_i}$ for all but the last feasible $(x_i,y_i)$ and checks whether each $(x_i,y_i)$ is feasible. Together, this takes order $n_{\alpha\kappa}$ time where $n_{\alpha\kappa}$ is the number of feasible points on the convex hull.

From here, we can use the formula for partial area in Figure 3 to calculate the areas and average them in $O(n_{\alpha \kappa})$ time.

Since the formulas in Figure 3 are linear combinations of quotients of linear functions of $t$, we can also integrate them directly getting a formula for the Volume associated to each $t$-range involving a logarithm. Computing this is again $O(n_{\alpha \kappa})$, since for a given $t$-range $[a,b]$, we can partition $[a,b]$ into subintervals $[a_i,b_i]$ such that the optimal feasible point is $(x_i,f(x_i))$ on the interval $[a_i,b_i]$. Thus the integral can be broken up into $n_{\alpha \kappa}$ integrals each of which can be evaluated in $O(1)$ time. The association of $x_i$'s to $[a_i,b_i]$ is also $O(n_{\alpha \kappa})$.

This yields an overall computational complexity of $O(n_{\alpha\kappa} \: \text{log} \; n_{\alpha\kappa}))$ if we need to compute the ROC convex hull. The partial ROC convex hull takes only $O(n_{\alpha\kappa}\:\text{log}\;n_{\alpha\kappa})$ or $O(n_{\alpha\kappa})$ time since we can start with $(0,0)$ and proceed right along the ROC curve until we hit the left boundary of the feasible region. If we already have the convex hull to begin with, we need only consider the points of the hull in the feasible region a determination that takes only $O(n_{\alpha\kappa})$ steps and from which the partial VOROS can be computed exactly in $O(n_{\alpha\kappa})$ steps.
\end{algorithmic}

Alternatively, we can approximate the partial VOROS by drawing a monte carlo sample of $p(t)$ and evaluating the area of lesser classifiers of our curve in $O(n_{\alpha \kappa})$, since we can find the optimal feasible point in an $O(\text{log}(n_{\alpha \kappa})$ step binary search, and then for each sample we can calculate the area of lesser classifiers in $O(1)$ steps. These partial areas can then be averaged to give a good approximation of the partial VOROS.

 \section{Details on Experiments}
 \label{supp_mimiciv}
 \subsection{License and availability}

Our experiments in Sec.~\ref{sec:mimiciv_experiments} use two open-access datasets available via PhysioNet~\citep{goldberger2000physionet} at \url{https://physionet.org/}.
First, the MIMIC-IV dataset \citep{johnsonMIMICIVFreelyAccessible2023} is freely available to qualified researchers subject to the PhysioNet Credentialed Health Data License 1.5.0.
Second, we used version 2.0 of the eICU Collaborative Research Database~\citep{pollardEICUCollaborativeResearch2018,PhysioNet-eicu-crd-2.0}, which is also available under the PhysioNet Credentialed Health Data License 1.5.0.

% We further release a few assets of our own:
% \begin{itemize}
% \item our own code is released under the MIT License (see link on page 1 of this supplement)
% \item our extracted ROC curves, which we obtained by carefully tracing the ROC plots in the published paper by \citet{edelsonEarlyWarningScores2024}, are also released under this license and available in our repository (link on page 1 of this supplement)
% \end{itemize}

\subsection{MIMIC-IV Mortality Prediction: Features and Task Setup}

For the mortality prediction experiments on MIMIC-IV data, we examine 6 vital signs, collected via bedside monitors and extracted from health records via the CHARTEVENTS table in MIMIC-IV.
We further examine 7 laboratory measurements from extracted blood and other fluids, again from the CHARTEVENTS table in MIMIC-IV.
See listing in Tab.~\ref{tab:vitals_labs}

These vitals and labs are extracted via best practices in \emph{clinical grouping} of conceptually similar variables that have distinct ITEMID codes in the EHR. We use the groupings provided in~\citet{wangMIMICExtractDataExtraction2020}, given explicitly in Tab.~\ref{tab:big_list_of_itemids} for our variables of interest.
Note that we extracted weight from the charts over time, but treated it as static (not dynamic) due to the limited 48 hour window.

\begin{table*}[!t]
\begin{minipage}{0.4\textwidth}
\begin{center}
    VITALS
\end{center}
\resizebox{\textwidth}{!}{
\begin{tabular}{l l}
Vital Sign & \texttt{feat\_name} in code
\\
\hline 
Blood pressure (diastolic) & bp\_diastolic\_mmHg
\\
Blood pressure (systolic) & bp\_systolic\_mmHg
\\
Heart rate & heart\_rate
\\
Oxygen saturation & oxygen\_saturation
\\
Respiratory Rate & resp\_rate
\\
Temperature & temp
\end{tabular}
}%endresizebox
\end{minipage}
\begin{minipage}{0.4\textwidth}
\begin{center}
LAB MEASUREMENTS
\end{center}
\resizebox{\textwidth}{!}{
\begin{tabular}{l l}
Lab Measurement & \texttt{feat\_name} in code
\\ \hline
    Cholesterol & cholesterol
    \\
    Glucose & glucose
    \\
    Hemoglobin & hemoglobin
    \\
    Lactic Acid & lactic\_acid
    \\
    pH & pH
    \\
    platelets & platelets
    \\
    white blood cell count & white\_blood\_cell\_count
    \end{tabular}
}%end resizebox
\end{minipage}
\caption{Summary of 6 vitals and 7 labs used in MIMIC-IV experiments}
\label{tab:vitals_labs}
\end{table*}

\begin{table*}[!t]
\begin{tabular}{r l r r l}
itemid & label                                 & unitname & feat\_name                &  \\
224643 & Manual Blood Pressure Diastolic Left  & mmHg     & bp\_diastolic\_mmHg       &  \\
225310 & ART BP Diastolic                      & mmHg     & bp\_diastolic\_mmHg       &  \\
220180 & Non Invasive Blood Pressure diastolic & mmHg     & bp\_diastolic\_mmHg       &  \\
220051 & Arterial Blood Pressure diastolic     & mmHg     & bp\_diastolic\_mmHg       &  \\
227243 & Manual Blood Pressure Systolic Right  & mmHg     & bp\_systolic\_mmHg        &  \\
224167 & Manual Blood Pressure Systolic Left   & mmHg     & bp\_systolic\_mmHg        &  \\
220179 & Non Invasive Blood Pressure systolic  & mmHg     & bp\_systolic\_mmHg        &  \\
225309 & ART BP Systolic                       & mmHg     & bp\_systolic\_mmHg        &  \\
220050 & Arterial Blood Pressure systolic      & mmHg     & bp\_systolic\_mmHg        &  \\
220603 & Cholesterol                           &          & cholesterol               &  \\
220621 & Glucose (serum)            &  & glucose                   &  \\
226537 & Glucose (whole blood)      &  & glucose                   &  \\
220045 & Heart Rate                            & bpm      & heart\_rate               &  \\
226730 & Height (cm)                           & cm       & height                    &  \\
226707 & Height                                & Inch     & height                    &  \\
220228 & Hemoglobin                            & g/dl     & hemoglobin                &  \\
225668 & Lactic Acid                           &          & lactic\_acid              &  \\
220277 & O2 saturation pulseoxymetry           & \%       & oxygen\_saturation        &  \\
220227 & Arterial O2 Saturation                & \%       & oxygen\_saturation        &  \\
223830 & PH (Arterial)                         &          & pH                        &  \\
220274 & PH (Venous)                           &          & pH                        &  \\
227457 & Platelet Count                        &          & platelets                 &  \\
224422 & Spont RR                              & bpm      & resp\_rate                &  \\
220210 & Respiratory Rate                      & insp/min & resp\_rate                &  \\
224689 & Respiratory Rate (spontaneous)        & insp/min & resp\_rate                &  \\
224690 & Respiratory Rate (Total)              & insp/min & resp\_rate                &  \\
223762 & Temperature Celsius                   & °C       & temp                      &  \\
223761 & Temperature Fahrenheit                & °F       & temp                      &  \\
224639 & Daily Weight                          & kg       & weight                    &  \\
226512 & Admission Weight (Kg)                 & kg       & weight                    &  \\
220546 & WBC                                   &          & white\_blood\_cell\_count & 
\end{tabular}
\caption{
    Exact ITEMIDs extracted from CHARTEVENTS table in MIMIC-IV
}
\label{tab:big_list_of_itemids}
\end{table*}

\clearpage 

\subsection{eICU Mortality Prediction: Features and Task Setup}

\textbf{Task description.}
For each ICU-stay in the eICU dataset \citep{PhysioNet-eicu-crd-2.0}, we collect static information upon admission and time-dependent health data for the first 48 hours. The classification task is to predict a patient’s mortality during their stay---if the patient will survive (class 0) or die (class 1) after the first 48 hours. We use the preprocessed eICU dataset extracted by FIDDLE consisting of binarized time-invariant and time-dependent features for each hour \citep{fiddle_extract, PhysioNet-mimic-eicu-fiddle-feature-1.0.0}. We reformatted the extract features to represent values for 2-hour buckets.

The train/valid./test data sets contain 38532/19267/19267 ICU-stays with prevalence $p$ of 0.116/0.114/0.114 (fraction of death cases). 

\textbf{Static Features.}
For static features, we chose 5 time-invariant variables, similar to those used in the MIMIC-IV experiment, which were extracted to 24 discretized and/or one-hot encoded features (Table \ref{tab:static-variables}).

\begin{table}[!h]
\centering

\begin{tabular}{ll}
\hline
\textbf{Variable Name} & \textbf{Value Ranges} \\
\hline
admissionheight &
$(-0.001,160.0]$ \\
& $(160.0,167.0]$ \\
& $(167.0,172.7]$ \\
& $(172.7,180.0]$ \\
& $(180.0,612.6]$ \\[0.5em]

admissionweight &
$(-0.001,63.5]$ \\
& $(63.5,74.8]$ \\
& $(74.8,86.2]$ \\
& $(86.2,102.5]$ \\
& $(102.5,953.0]$ \\[0.5em]

age &
$(-0.001,51.0]$ \\
& $(51.0,61.0]$ \\
& $(61.0,69.0]$ \\
& $(69.0,78.0]$ \\
& $(78.0,89.0]$ \\
& $>89$ \\[0.5em]

ethnicity &
African American \\
& Asian \\
& Caucasian \\
& Hispanic \\
& Other / Unknown \\[0.5em]

gender &
Female \\
& Male \\
\hline

\end{tabular}
\caption{Static Features and Associated Value Ranges used in eICU experiments}
\label{tab:static-variables}
\end{table}

\textbf{Time-varying features.}
We apply 6 summary functions (mean, variance, min, median, max, slope) to each of the 85 FIDDLE features, comprising of 17 unique time-varying variables which were discretized and one-hot encoded (Table \ref{tab:dynamic-variables-detailed}), over 3 windows (0-48 hours, 24-48 hours, 32-48 hours). We chose variables that also had associated minimum and maximum features in the complete extracted features. This meant that during extraction, these were deemed ``frequent" variables. For example, a feature that was included indicates if temperature measured was between the range $(36.4,37.0$]. We removed the ``was ever measured'' and ``time since last measurement'' summary functions used with MIMIC-IV, as the extracted features from FIDDLE already removed missing values. A future step could be to incorporate the ``mask" variable (which likely represents missingness) found in the extracted features from FIDDLE post-application of summary functions, as we do not want to apply the other 6 summary functions to ``mask."

\begin{table}[!h]
\centering
\begin{tabular}{ll}
\hline
\textbf{Variable Name} & \textbf{Value Range} \\
\hline

Vital Signs\textbar{}Heart Rate\textbar{}Heart Rate &
$(-0.001, 70.0]$, $(70.0, 80.0]$, $(80.0, 90.0]$, \\
& $(90.0, 102.0]$, $(102.0, 7734.0]$ \\[0.5em]

Vital Signs\textbar{}O2 Saturation\textbar{}O2 Saturation &
$(-0.001, 95.0]$, $(95.0, 96.0]$, \\
& $(96.0, 98.0]$, $(98.0, 100.0]$ \\[0.5em]

Vital Signs\textbar{}Respiratory Rate\textbar{}Respiratory Rate &
$(-0.001, 15.0]$, $(15.0, 18.0]$, $(18.0, 20.0]$, \\
& $(20.0, 24.0]$, $(24.0, 330.0]$ \\[0.5em]

cvp &
$(-95.001, 6.0]$, $(6.0, 10.0]$, $(10.0, 14.0]$, \\
& $(14.0, 21.0]$, $(21.0, 400.0]$ \\[0.5em]

heartrate &
$(-0.001, 70.0]$, $(70.0, 80.0]$, $(80.0, 90.0]$, \\
& $(90.0, 102.0]$, $(102.0, 300.0]$ \\[0.5em]

noninvasivediastolic &
$(-0.001, 53.0]$, $(53.0, 60.0]$, $(60.0, 67.0]$, \\
& $(67.0, 77.0]$, $(77.0, 263.0]$ \\[0.5em]

noninvasivemean &
$(-0.001, 67.0]$, $(67.0, 75.0]$, $(75.0, 84.0]$, \\
& $(84.0, 94.0]$, $(94.0, 285.0]$ \\[0.5em]

noninvasivesystolic &
$(-0.001, 101.0]$, $(101.0, 113.0]$, $(113.0, 125.0]$, \\
& $(125.0, 140.0]$, $(140.0, 287.0]$ \\[0.5em]

respiration &
$(-0.001, 15.0]$, $(15.0, 18.0]$, $(18.0, 21.0]$, \\
& $(21.0, 24.0]$, $(24.0, 194.0]$ \\[0.5em]

sao2 &
$(-0.001, 94.0]$, $(94.0, 96.0]$, \\
& $(96.0, 98.0]$, $(98.0, 100.0]$ \\[0.5em]

st1 &
$(-25.001, -0.2]$, $(-0.2, -0.01]$, $(-0.01, 0.0]$, \\
& $(0.0, 0.2]$, $(0.2, 1090.0]$ \\[0.5em]

st2 &
$(-16.551, -0.4]$, $(-0.4, -0.09]$, $(-0.09, 0.1]$, \\
& $(0.1, 0.48]$, $(0.48, 1170.0]$ \\[0.5em]

st3 &
$(-23.401, -0.3]$, $(-0.3, -0.08]$, $(-0.08, 0.1]$, \\
& $(0.1, 0.4]$, $(0.4, 1170.0]$ \\[0.5em]

systemicdiastolic &
$(-75.001, 48.0]$, $(48.0, 55.0]$, $(55.0, 62.0]$, \\
& $(62.0, 71.0]$, $(71.0, 395.0]$ \\[0.5em]

systemicmean &
$(-72.001, 66.0]$, $(66.0, 74.0]$, $(74.0, 81.0]$, \\
& $(81.0, 93.0]$, $(93.0, 397.0]$ \\[0.5em]

systemicsystolic &
$(-94.001, 98.0]$, $(98.0, 112.0]$, $(112.0, 124.0]$, \\
& $(124.0, 141.0]$, $(141.0, 398.0]$ \\[0.5em]

temperature &
$(-2622.401, 36.4]$, $(36.4, 37.0]$, $(37.0, 37.4]$, \\
& $(37.4, 37.8]$, $(37.8, 105.8]$ \\

\hline
\end{tabular}
\caption{Dynamic Features and Associated Value Ranges used in eICU experiments}
\label{tab:dynamic-variables-detailed}
\end{table}

% \\\\
% \textbf{Classifiers.}
% We use logistic regression, multi-layer perceptron, and random forest as classifiers, grid searching over 48+ hyperparameter configurations (the same configurations used in the MIMIC-IV experiment).

% \textbf{Evaluation Plan.}
% We compare model selection using 2 different strategies—maximum recall in feasible region and maximum partial VOROS. For each selection strategy, we select the best hyperparameter configuration and the best threshold $\tau(t)$ for a given $t$ range. For the cost-aware strategy partial VOROS, the threshold depends on the $t$ in the range, while for the cost-unaware strategy maximum recall, $\tau(t)$ produces the same threshold (we use the threshold that maximizes recall under precision and capacity constraints). We then use each selected model and threshold for both cost parameter ranges to make predictions on the test set (unseen data), then calculate the expected cost over each cost parameter range through Monte-Carlo sampling. 
% \\\\
\textbf{Evaluation plan.}
Given the similarity of the eICU mortality prediction task with the task from MIMIC-IV, we use the same scenarios for setting $\alpha, \kappa, \cRatio$, the same list of possible classifiers and hyperparameters, and the same selection strategies.

In Scenario 1 (more permissive), on eICU data the chosen cost ratio distribution $\cRatio \sim \text{Unif}(\frac{1}{9},\frac{1}{6})$ maps to a non-uniform distribution over the fractional cost parameter $t\in[0.46,0.56]$.

In Scenario 2 (more constrained), on eICU data the chosen cost ratio distribution $\cRatio \sim \text{Unif}(\frac{1}{40},\frac{1}{20})$ maps to a non-uniform distribution over $t \in [0.16, 0.28]$.

% We look at two scenarios of precision, capacity, and misclassification costs. In scenario 1, we use $\alpha=0.15,\kappa=0.5|D|,\frac{C_0}{C_1}=\text{Unif}(\frac{1}{9},\frac{1}{6})$ (where $C_0$ is the cost of a false positive and $C_1$ is the cost of a false negative). This cost ratio maps to cost parameter $t\in[0.46,0.56]$. In mortality prediction, false negative should always be more costly than a false positive (not alerting for an at-risk patient who may then die has greater cost than an unnecessary alert), but here false negatives cost less than 10x the cost of a false positive. In scenario 2, we use $\alpha=0.5,\kappa=0.1|D|,\frac{C_0}{C_1}=\text{Unif}(\frac{1}{40},\frac{1}{20})$ to match economic estimates of costs where false negative is 20x to 40x greater cost than a false positive \citep{rogers_optimizing_2023}. This maps to $t\in[0.16,0.28]$.

\clearpage

\subsection{Model Development, Hyperparameter Grid, and Computational Hardware}
\label{supp:model_dev_and_hypers}

Given a specific dataset for the mortality prediction task (either MIMIC-IV or eICU), we train many versions of 3 different model families: logistic regression (LR), multi-layer perceptron (MLP), and random forest (RF) using implementations from sklearn~\citep{pedregosaScikitlearnMachineLearning2011}.
For each model family $m$ in LR, RF, and MLP, we train a set $\mathcal{H}_m$ of different candidates across a spectrum of hyperparameters designed to span under-fitting and over-fitting, hopefully including some well-fitting model instances.  

The full hyperparameter grid is provided below in Table \ref{tab:sup_hyper}. The logistic regression model was allowed a larger hyperparameter grid to compensate for its reduced parameter size, attempting to give each model family roughly equal computation runtime. 

\paragraph{Hardware.} All experiments were run using 4 CPU cores and 16GB of memory on a high-performance computing cluster. Only 30 minutes of wallclock time was allowed for each individual model fit. All runs completed within this limit.

% \begin{table*}[!t]
% \centering
% \begin{tabular}{l l l l l l l l l}
% \toprule
% Classifier & inverse regularization strength (C) & max\_iter & rare\_class\_weight & hidden\_layer\_sizes & alpha & learning\_rate\_init & max\_depth & min\_samples\_leaf \\
% \midrule
% Logistic regression & 10, 100, 1000, 10000, 100000 & 1000, 49, 7, 1 & 1, 3, 9, 27 & -- & -- & -- & -- & -- \\
% Multilayer perceptron (MLP) & -- & 100, 200 & -- & (64,), (64,64), (128,64) & 0.0001, 0.001 & 0.001, 0.0005 & -- & -- \\
% Random forest & -- & -- & 1, 3, 9, 27 & -- & -- & -- & 4, 16, 64 & 4, 16, 64 \\
% \bottomrule
% \end{tabular}
% \caption{Hyperparameter grid used in workflows/krh\_202509/step4\_run\_grid.sh (random seed omitted). C = inverse regularization strength.}
% \label{tab:sup_hyper}
% \end{table*}

\begin{table*}[ht]
\centering
\begin{tabular}{l | ccc}
\toprule
Hyperparameter & Logistic Regression & MLP & Random Forest \\
\midrule
Rare Class Weight & 1, 3, 9, 27 & -- & 1, 3, 9, 27 \\
Max Iterations & 1000, 49, 7, 1 & 100, 200 & -- \\
Inverse L2 Reg. Strength $C$ & 10, 100, 1000, 10000, 100000 & -- & -- \\
L2 Regularization Strength & -- & 0.0001, 0.001 & -- \\
Hidden Layer Sizes & -- & (64,), (64,64), (128,64) & -- \\
Learning Rate & -- & 0.001, 0.0005 & -- \\
Max Depth & -- & -- & 4, 16, 64 \\
Minimum Examples in Leaf & -- & -- & 4, 16, 64 \\
\bottomrule
\end{tabular}
\caption{Hyperparameter grid used in model selection experiments on MIMIC-IV and eICU.}
\label{tab:sup_hyper}
\end{table*}

\subsection{Procedure for model selection and deployment-aware testing}

For each of Scenario 1 and Scenario 2 described in the main paper, we define at the outset some desired constraints via specific values of $\alpha,\kappa$, as well as a desired non-uniform density over the fractional cost-parameter $p(t)$, via a chosen scenario-specific distribution over the cost ratio $\cRatio$.

We then perform the model selection and threshold selection phases described below for each scenario and dataset (MIMIC-IV and eICU). Each time, we begin with all ROC curves from the full hyperparameter grid described above in Tab.~\ref{tab:sup_hyper}.

%that is \emph{potentially non-uniform}, but that satisfies $\int_{t=a}^b p(t) dt = 1$ over a provided range $[a,b] \subseteq [0,1]$. For example, we can define a desired distribution over the cost ratio $C_0/C_1$ as in main paper, and map this to a density $p(t)$ either explicitly (via change of variables) or implicitly (via sampling cost ratios and mapping each sample to $t$). %Note that while our main paper defined partial VOROS as an integral over a uniform $t$ density, both VOROS and partial VOROS can account for a non-uniform $p(t)$ within the integral naturally, as described in \citet{ratigan2024voros}.

\textbf{Model selection phase.}
Denote the union of all trained model-hyperparameter configurations from  Tab.~\ref{tab:sup_hyper} as $\mathcal{H} = \mathcal{H}_{LR} \bigcup \mathcal{H}_{RF} \bigcup \mathcal{H}_{MLP}$.
Each element in $\mathcal{H}$ results in a score-producing classifier (as defined in Sec.~\ref{sec:background}) with its own ROC curve on the validation set. 

Given possible model-hyperparameter configurations $\mathcal{H}$ and their corresponding validation-set ROC curves, and a scenario-specific values for the constraints $\alpha,\kappa$ and a cost distribution $p(t)$, we use each of the following selection strategies
to pick a single model-hyperparameter combination $\mathsf{h}^* \in \mathcal{H}$:
\begin{itemize}
    \item maximizing partial VOROS: accounts for cost distribution $p(t)$ and constraints $\alpha,\kappa$
    \item maximizing total VOROS: accounts for cost distribution $p(t)$ but \emph{ignores} $\alpha,\kappa$
    \item maximizing recall in feasible region: accounts for constraints $\alpha,\kappa$ but ignores costs $p(t)$
    \item maximizing partial AUROC in feasible region: accounts for constraints $\alpha,\kappa$ but ignores costs $p(t)$
\end{itemize}
\textbf{Threshold selection phase.} Using the selected model-hyperparameter $\mathsf{h}^*$, we determine the binarization threshold $\tau(t)$ as follows:
\begin{itemize}
\item For cost-unaware strategies (max recall or max pAUROC): We use the same threshold regardless of $t$, so $\tau(t)$ is just a constant function. We pick the threshold value using $\mathsf{h}^*$'s validation set ROC curve, 
searching over all valid thresholds for the one that produces maximum recall while ensuring the predictions follow the precision and capacity constraints $\alpha,\kappa$.
\item For cost-aware strategies (partial VOROS or total VOROS): At each value of cost parameter $t$, we find a cost-specific threshold $\tau(t)$ 
using $\mathsf{h}^*$'s validation set ROC curve.
We search over all valid thresholds for the ROC point $h,k$ that both meets the desired precision and capacity constraints set by $\alpha,\kappa$ and also 
minimizes cost with respect to $t$ (using Defn.~\ref{defn:cost}).
\end{itemize}
% Using this $h^*$, we can revisit each $t$ value in the given range, and determine a specific binarization threshold $\tau(t)$ that performs best at that $t$. Some cost-unaware strategies like maximizing recall will use the same $\tau$ always, so $\tau(t)$ is just a flat function.

%\clearpage 

\textbf{Test phase.}
We wish to mimic authentic deployment of a real alert system for given limits of $\alpha, \kappa$ and costs $t \in [a,b]$ weighted by the given density $p(t)$.
We will force each selected model on the test data to use its predetermined threshold $\tau(t)$ to perform alerts, thus producing binary predictions (not scores) on the test set for a given $t$. This avoids ``tuning'' the selected threshold on the test set, which is not possible in prospective deployments.

On the test set using each predetermined threshold $\tau(t)$, we record the fpr,tpr location in ROC space as $(h_{\tau(t)},k_{\tau(t)})$. 
We then report the expected cost over the provided range:
\begin{align}
    \mathbb{E}_{t \sim p(t)}[ \text{Cost} ] &= \int_{t=a}^b p(t) \text{Cost}_t(h_{\tau(t)},k_{\tau(t)}) dt
    \\ \notag 
    &= \int_{t=a}^b p(t) \Big[ t h_{\tau(t)} + (1-t) (1- k_{\tau(t)}) \Big] dt
\end{align}
In this paper, throughout Sec.~\ref{sec:edelson_experiments}-\ref{sec:mimiciv_experiments} we always assume a uniform distribution over cost ratios $C_{\text{FP:FN}}$, between minimum ratio $a_c$ and maximum ratio $b_c$. 
We can then write the original expected cost as an integral over possible values $c$ for this ratio:
\begin{align}
    \mathbb{E}_{c \sim \text{Unif}(a_c,b_c)}[ \text{Cost} ] &= \int_{c=a_c}^{b_c} \frac{1}{b_c - a_c} \text{Cost}_{t(c)}(h_{\tau(c)},k_{\tau(c)}) dc
\end{align}

There are two ways to evaluate this integral in practice:
\begin{itemize}
\item Monte Carlo estimation. We can sample $S$ iid ratios $c_s \sim \text{Unif}(a_c, b_c)$, map each to a fractional cost parameter $t_s = t(c_s)$, then compute
\begin{align}
    &\approx \frac{1}{S} \sum_{s=1}^S \Big[ t_s h_{\tau(t_s)} + (1-t_s) (1- k_{\tau(t_s)}) \Big]
\end{align}

\item Numerical integration via the trapezoid method, with a sufficiently dense grid of $c$ values.
\end{itemize}
For all expected cost evaluations in this paper, we use the trapezoid method with a grid of 1000 evenly-spaced $c$ values from $a_c$ to $b_c$.

\clearpage 

\subsection{Results on MIMIC-IV and eICU: Comparing Model Selection Strategies}
\label{supp:results_eicu}
\label{supp:results_mimiciv}

Expanded tables showing many relevant performance metrics on both validation set and test set are in Tab.~\ref{tab:mimiciv_scenario_tables_detailed} and Tab.~\ref{tab:eicu_scenario_tables_detailed}.

\begin{figure}[!h]
    \centering
{\large Scenario 1 on MIMIC-IV} \medskip
    \resizebox{0.96\textwidth}{!}{%    
    \begin{tabular}{r r r r l | r r r l}
        & \multicolumn{8}{c}{
        $\alpha=0.15,
        ~~\frac{C_0}{C_1}\sim \text{Unif}(\frac{1}{9},\frac{1}{6})$
        } \\
        \hline
            & \multicolumn{4}{c}{on Val. set: $t \in [0.5,0.6], \kappa = 0.5 |\mathcal{D}| = 3901$,}
            &  \multicolumn{4}{c}{on Test set: $t \in [0.48,0.58], \kappa = 0.5 |\mathcal{D}| = 3930.5$}
        \\
        Strategy 
            & Avg.~Cost & Precision & Capacity & Satisfied?
            & Avg.~Cost & Precision & Capacity & Satisfied? \\
        \hline
         max VOROS 
            & \textbf{0.266} & 0.250 & 2277.5 & Yes
            & \textbf{0.261} & 0.262 & 2310.1 & Yes\\
         max pAUROC 
            & 0.313 & 0.178 & 3896 & Yes
            & 0.303 & 0.188 & 3919 & Yes \\
         max recall 
            & 0.304 & 0.181 & 3888 & Yes
            & 0.302 & 0.188 & 3926 & Yes\\
         max PV (ours) 
         & \textbf{0.266} & 0.250 & 2277.5 & Yes
         & \textbf{0.261} & 0.262 & 2310.1 & Yes
    \end{tabular}
    }% end resizebox
    \\
    \medskip  
       
    {\large Scenario 2 on MIMIC-IV} \medskip
    \resizebox{0.96\textwidth}{!}{%
    \begin{tabular}{r r r r l | r r r l}
        & \multicolumn{8}{c}{
        $\alpha=0.5$,
        
        ~~$\frac{C_0}{C_1}\sim \text{Unif}(\frac{1}{40},\frac{1}{20})$
        } \\
        \hline
            & \multicolumn{4}{c}{on Val. set: $t \in [0.18,0.31], \kappa=0.1 |\mathcal{D}| = 780.2$,}
            & \multicolumn{4}{c}{on Test set: $t \in [0.17,0.29], \kappa=0.1 |\mathcal{D}| = 786.1$}
        \\
        Strategy 
            & Avg.~Cost & Precision & Capacity & Satisfied?
            & Avg.~Cost & Precision & Capacity & Satisfied? \\
        \hline
         max VOROS 
            & 0.665 & 0.505 & 196 & Yes
            & 0.640 & 0.597 & 236 & Yes\\
         max pAUROC 
            & \textbf{0.561} & 0.501 & 429 & Yes
            & \textbf{0.538} & 0.533 & 486 & Yes\\
         max recall 
            & \textbf{0.561} & 0.501 & 429 & Yes
            &  \textbf{0.538} & 0.533 & 486 & Yes\\
         max PV (ours) 
         & \textbf{0.561} & 0.501 & 429 & Yes
         & \textbf{0.538} & 0.533 & 486 & Yes
    \end{tabular}
    }
    \caption{\textbf{Detailed results for selected classifiers on MIMIC-IV.}
    Average cost, precision, capacity on valid and test sets across two misclassification cost scenarios for maximum (full) VOROS, maximum partial AUROC, maximum recall, and maximum partial VOROS strategies. Here on MIMIC-IV, no selected models violated constraints on test.
    }%endcaption
    \label{tab:mimiciv_scenario_tables_detailed}
\end{figure}

\begin{figure}[!h]
    \centering
    % & scenario 1 & scenario 2 \\
    %     & $\alpha=0.15,\kappa=0.5$ & $\alpha=0.5,\kappa=0.1$\\
    %     & $\frac{C_0}{C_1}\sim \text{Unif}(\frac{1}{9},\frac{1}{6})$ & $\frac{C_0}{C_1}\sim \text{Unif}(\frac{1}{40},\frac{1}{20})$\\
    %     & $t \in [0.46,0.56]$ & $t \in [0.16,0.28]$\\
    {\large Scenario 1 on eICU} \medskip
    \resizebox{0.96\textwidth}{!}{%
    \begin{tabular}{r r r r l | r r r l}
        & \multicolumn{8}{c}{
        $\alpha=0.15,
        ~~\kappa = 0.5 |\mathcal{D}| = 9633.5$,
        ~~$\frac{C_0}{C_1}\sim \text{Unif}(\frac{1}{9},\frac{1}{6})$
        } \\
        \hline
            & \multicolumn{4}{c}{on Val. set: $t \in [0.46,0.56]$}
            & \multicolumn{4}{c}{on Test set: $t \in [0.46,0.56]$}
        \\
        Strategy 
            & Avg.~Cost & Precision & Capacity & Satisfied?
            & Avg.~Cost & Precision & Capacity & Satisfied? \\
        \hline
         max VOROS 
            & \textbf{0.307} & 0.224 & 6845.4 & Yes
            & \textbf{0.319} & 0.215 & 7037.7 & Yes\\
         max pAUROC 
            & 0.325 & 0.186 & 9631 & Yes
            & 0.336 & 0.182 & {\color{red} 9800} & No (capacity) \\
         max recall 
            & 0.325 & 0.186 & 9631 & Yes
            & 0.336 & 0.182 & {\color{red} 9800} & No (capacity)\\
         max PV (ours) 
         & \textbf{0.307} & 0.224 & 6845.4 & Yes
         & \textbf{0.319} & 0.215 & 7037.7 & Yes
    \end{tabular}
    }%end resizebox
    \\
    \medskip  
       
    {\large Scenario 2 on eICU} \medskip
    \resizebox{0.96\textwidth}{!}{%
    \begin{tabular}{r r r r l | r r r l}
        & \multicolumn{8}{c}{
        $\alpha=0.5,
        ~~\kappa=0.1 |\mathcal{D}| = 1926.7$,
        ~~$\frac{C_0}{C_1}\sim \text{Unif}(\frac{1}{40},\frac{1}{20})$
        } \\
        \hline
            & \multicolumn{4}{c}{on Val. set: $t \in [0.16,0.28]$}
            & \multicolumn{4}{c}{on Test set: $t \in [0.16,0.28]$}
        \\
        Strategy 
            & Avg.~Cost & Precision & Capacity & Satisfied?
            & Avg.~Cost & Precision & Capacity & Satisfied? \\
        \hline
         max VOROS 
            & 0.773 & 0.500 & 12 & Yes
            & \textbf{0.772} & 0.684 & 19 & Yes\\
         max pAUROC 
            & \textbf{0.693} & 0.501 & 483 & Yes
            & 0.707 & {\color{red} 0.437} & 474 & No (precision)\\
         max recall 
            & \textbf{0.693} & 0.501 & 483 & Yes
            &  0.707 &  {\color{red} 0.437} & 474 & No (precision)\\
         max PV (ours) 
         & \textbf{0.693} & 0.501 & 483 & Yes
         & 0.707 &  {\color{red} 0.437} & 474 & No (precision)
    \end{tabular}
    }%end resizebox
    \\
        \caption{\textbf{Detailed results for selected classifiers on eICU.}
        Average cost, precision, capacity on valid and test sets across two misclassification cost scenarios for maximum (full) VOROS, maximum partial AUROC, maximum recall, and maximum partial VOROS strategies. Results that violate constraints are highlighted red.}
    \label{tab:eicu_scenario_tables_detailed}
\end{figure}

\newpage

\setlength{\tabcolsep}{1mm}
\begin{figure*}
\centering
\begin{tabular}{c c}
\includegraphics[height=5cm,width=0.45\textwidth,keepaspectratio]{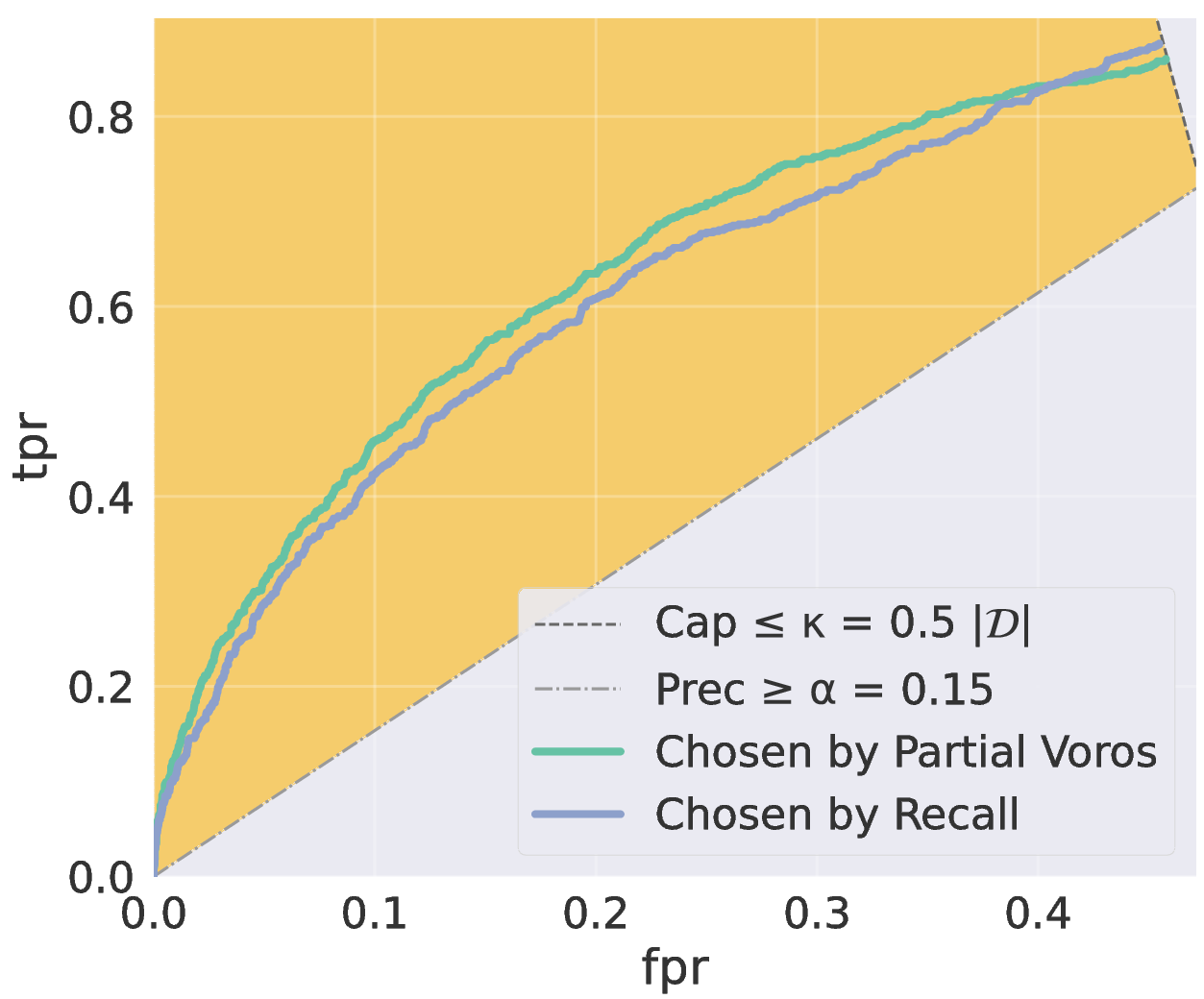}
&
\includegraphics[height=5cm,width=0.45\textwidth,keepaspectratio]{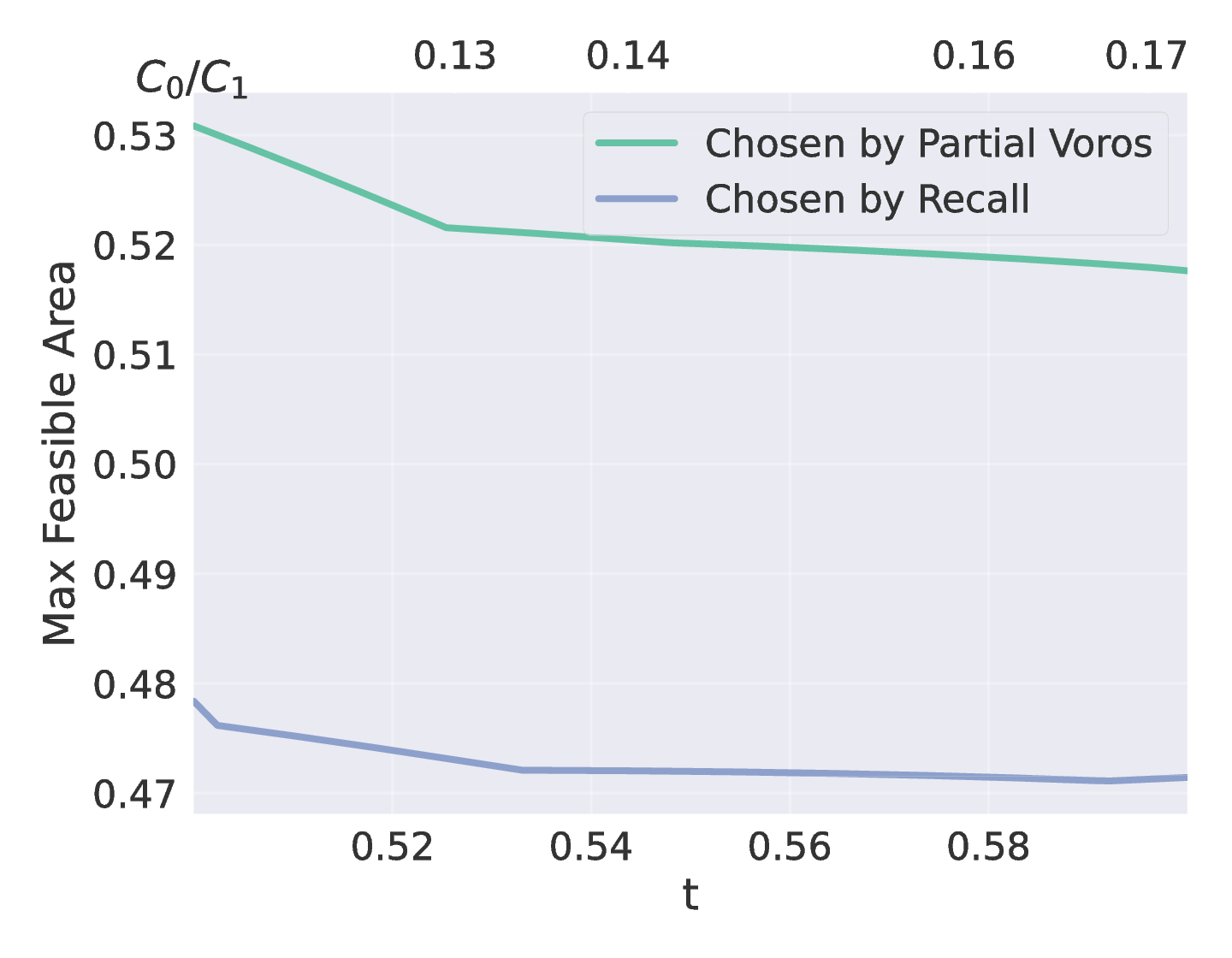}
% &
% \begin{minipage}[t]{0.42\textwidth}
%   \vspace{2mm} % NEEDED TO ALIGN TO TOP
% \resizebox{\textwidth}{!}{%
% \begin{tabular}{l r r}
% & scenario 1 & scenario 2
% \\
% & $\alpha = 0.15, \kappa = 0.5 |\mathcal{D}|$
% & $\alpha = 0.5, \kappa = 0.1 |\mathcal{D}|$
% \\
% & $\frac{\cFP}{\cFN} \sim \text{Unif}(\frac{1}{9}, \frac{1}{6} )$
% & $\frac{\cFP}{\cFN} \sim \text{Unif}(\frac{1}{40}, \frac{1}{20} )$
% \\
% & $t \in [.5,.6]$
% & $t \in [.18,.31]$
% \\
% \hline
% Strategy 
%  & Avg. Cost & Avg. Cost
% \\
% \hline
% max VOROS
% & 0.306
% & 0.636
% \\
% max pAUROC
% & 0.306
% & \textbf{0.535}
% \\
% max recall 
% & 0.305
% & \textbf{0.535}
% \\
% max PV (ours)
% & \textbf{0.261}
% & \textbf{0.535}
% \end{tabular}
% }%end resizebox
% \end{minipage}
\end{tabular}
\caption{
\textbf{Diagnostic visuals for model selection on MIMIC-IV.}
We compare our \emph{partial VOROS} against other strategies for selecting model family, hyperparameters, and  decision thresholds using validation data. We evaluate each selection's binary alerts on test data in terms of average cost over the provided distribution for cost ratio $\cRatio$.
\emph{Left:} The ROC curve on heldout data in the feasible region (gold), for the two best  selection strategies in scenario 1. 
%$\alpha=0.15, \kappa=0.5 |\mathcal{D}|, t \in [0.5, 0.6]$
\emph{Right:} The maximum feasible area of lesser classifiers across a range of cost parameter values $t$, with the corresponding cost ratio $\cRatio$ on the top axis.
%$\alpha=0.15, \kappa=0.5 |\mathcal{D}|, t \in [0.5, 0.6]$
% \emph{Right:} Table of costs on test set for each strategy and scenario.
}%endcaption
\label{suppfig:results_mimiciv_2panel}
\end{figure*}

\begin{figure}[h!]
    \centering
    \begin{tabular}{c c}
    \includegraphics[width=0.45\textwidth,height=5cm,keepaspectratio]{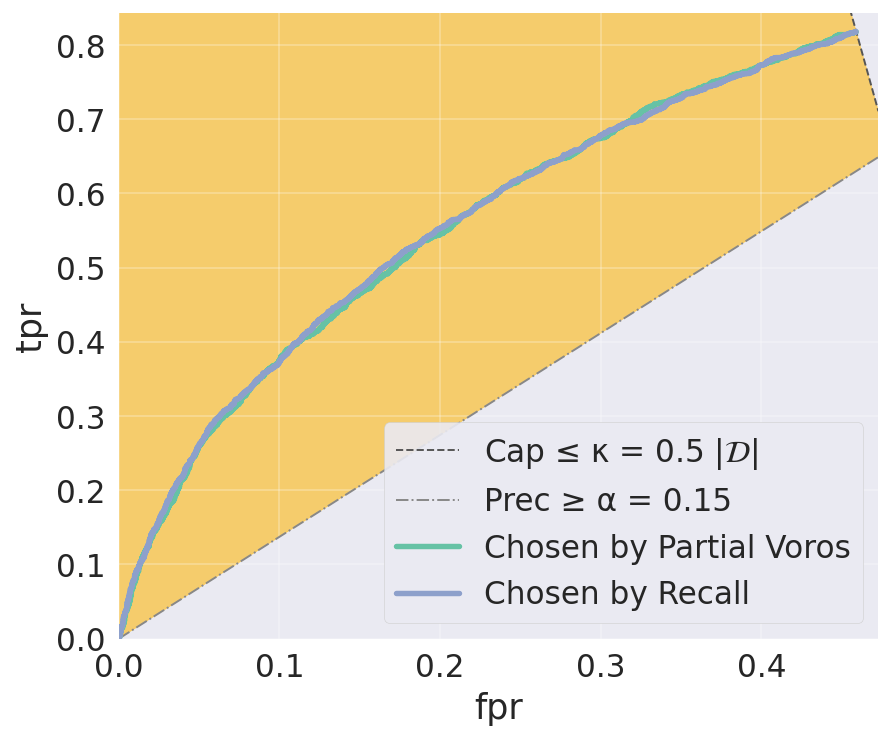}
    &
    \includegraphics[width=0.45\textwidth,height=5cm,keepaspectratio]{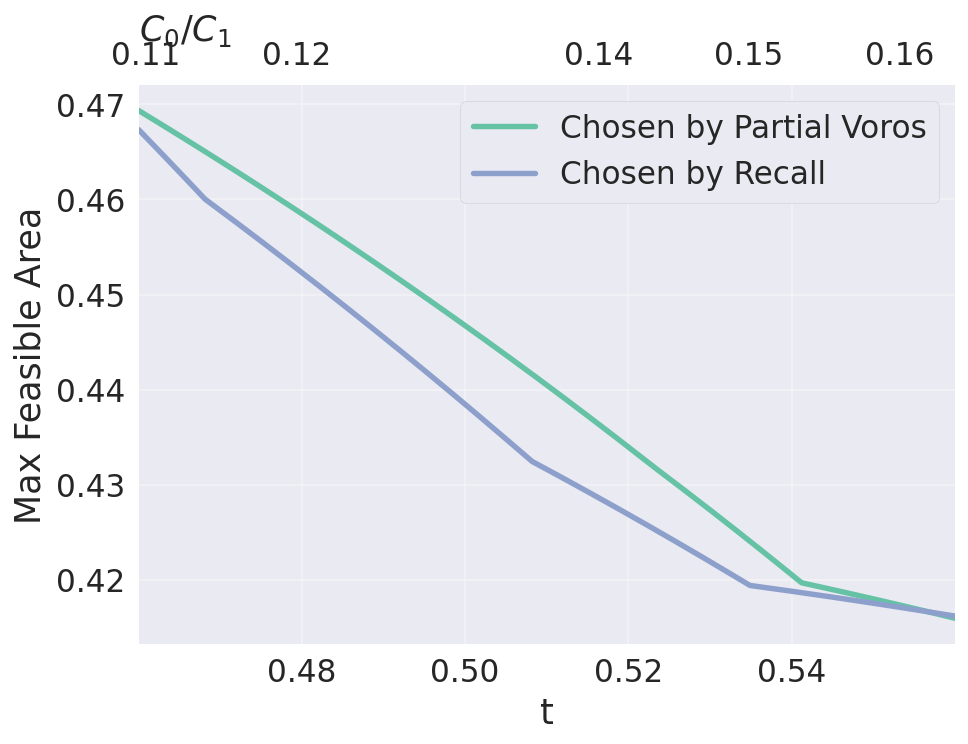}
    \end{tabular}
    \caption{\textbf{Diagnostic visuals for model selection on eICU.} We show plots for models selected by partial VOROS and recall strategies for Scenario 1.
    \emph{Left:} ROC curves in the feasible region on the validation set.
    \emph{Right:} Maximum feasible area of lesser classifiers for a range of cost parameter $t \in [0.46, 0.56]$ on the validation set.
    }%end caption
    % \caption{Maximum feasible area of lesser classifiers for a range of cost parameter $t \in [0.46, 0.56]$.}
    \label{suppfig:eicu_curves}
\end{figure}

\end{document}